\def\eqref#1{equation~\ref{#1}}
\def\1{\bm{1}}
\DeclareMathAlphabet{\mathsfit}{\encodingdefault}{\sfdefault}{m}{sl}
\SetMathAlphabet{\mathsfit}{bold}{\encodingdefault}{\sfdefault}{bx}{n}
\DeclareMathOperator*{\argmax}{arg\,max}
\theoremstyle{plain}
\newtheorem{theorem}{Theorem}[section]
\newtheorem{lemma}[theorem]{Lemma}
\theoremstyle{definition}
\newtheorem{definition}[theorem]{Definition}
\theoremstyle{remark}
\newtheorem{manualtheorem}{Theorem}[section]
\title{Active Teacher Selection for Reward Learning}
\author{\name Rachel Freedman \email rachel.freedman@berkeley.edu \\
\addr Department of Computer Science and Electrical Engineering \\ University of California, Berkeley
\AND
\name Justin Svegliato \email jsvegliato@berkeley.edu \\ 
\addr Department of Computer Science and Electrical Engineering \\ University of California, Berkeley
\AND 
\name Kyle Wray \email k.wray@northeastern.edu \\
\addr Khoury College of Computer Sciences \\ Northeastern University
\AND
\name Stuart Russell \email russell@berkeley.edu \\ 
\addr Department of Computer Science and Electrical Engineering \\ University of California, Berkeley
}
\begin{document}

\maketitle

\begin{abstract}
      Reward learning techniques enable machine learning systems to learn objectives from human feedback. A core limitation of these systems is their assumption that all feedback comes from a single human teacher, despite gathering feedback from large and heterogeneous populations. We propose the \textit{Hidden Utility Bandit} (HUB) framework to model differences in teacher rationality, expertise, and costliness, formalizing the problem of learning from multiple teachers. We develop a variety of solution algorithms and apply them to two real-world domains: paper recommendation systems and COVID-19 vaccine testing. We find that \textit{Active Teacher Selection} (ATS) algorithms outperform baselines by actively selecting when and which teacher to query. 
      Our key contributions are 1) the HUB framework: a novel mathematical framework for modeling the teacher selection problem, 2) ATS: an active-learning based algorithmic approach that demonstrates the utility of modeling teacher heterogeneity, and 
      3) proof-of-concept application of the HUB framework and ATS approaches to model and solve multiple real-world problems with complex trade-offs between reward learning and optimization.

\end{abstract}

\section{Introduction}

    Specifying objective functions for machine learning systems is challenging, and misspecified objectives can be hacked~\citep{Pan2022-lw, Skalse2022-rg} or incentivise degenerate behavior~\citep{Zhuang2020-jx, Thomas2020-cq}. \textit{Reward learning} techniques such as reinforcement learning from human feedback (RLHF) enable state of the art ML systems to instead \textit{learn} appropriate objectives by observing and interacting with human teachers~\citep{gpt4, claude, llama, bard}. However, almost all deployed systems still rely on a \textit{single-teacher assumption}: they assume feedback is generated by one canonical human teacher with fixed noise properties, even though in practice it is aggregated from a large, heterogeneous pool of humans. For example, \citet{Stiennon2020-wf}, \citet{bai2022training} and \citet{Ouyang2022-qs} assume that all feedback comes from a single teacher, despite finding that annotators and researchers actually disagree 23\% to 37\% of the time.

    This mismatch between formalism and reality is becoming increasingly problematic. In realistic settings, the humans that teach AI systems vary in expertise, attentiveness, and capabilities, annotation platforms mix crowdworkers with domain experts, and safety-critical systems must weigh the benefits of high-quality experts against their higher cost and scarcity. Prior work shows that reward learning is highly sensitive to incorrect assumptions about feedback generation~\citep{hong2022sensitivity, Freedman2021-df, Skalse2022-np, Milli2020-fz}, and in particular that unrecognized heterogeneity in teacher rationality and context can significantly distort learned reward functions~\citep{shirali2025direct,Daniels-Koch2022-me}. Nevertheless, current systems lack a framework for modeling differences between teachers and reasoning about \textit{which} teacher to query for feedback \textit{when} to query at all.

    We formalize and study this \textit{teacher selection problem}: given access to multiple teachers with different levels of reliability and cost, how should a reward-learning agent decide \textit{which} teachers to query \textit{when}? Good solutions must balance taking actions to gather utility according to the current reward model and querying teachers to improve that reward model, and determine when to query expensive, highly accurate teachers over cheaper, noisier ones. We address this gap by introducing the \textit{Hidden Utility Bandit} (HUB) framework for reward learning from multiple teachers (Section~\ref{sec:HUB}). A HUB instance consists of a set of items with utility values that are hidden from the agent (but visible to teachers), a set of arms that stochastically produce items when pulled, and a set of teachers with a shared but unknown utility function. Figure~\ref{fig:front} shows a sample solution to a simple HUB problem. 

    \begin{figure*}[t]
         \centering
         \includegraphics[width=0.85\linewidth]{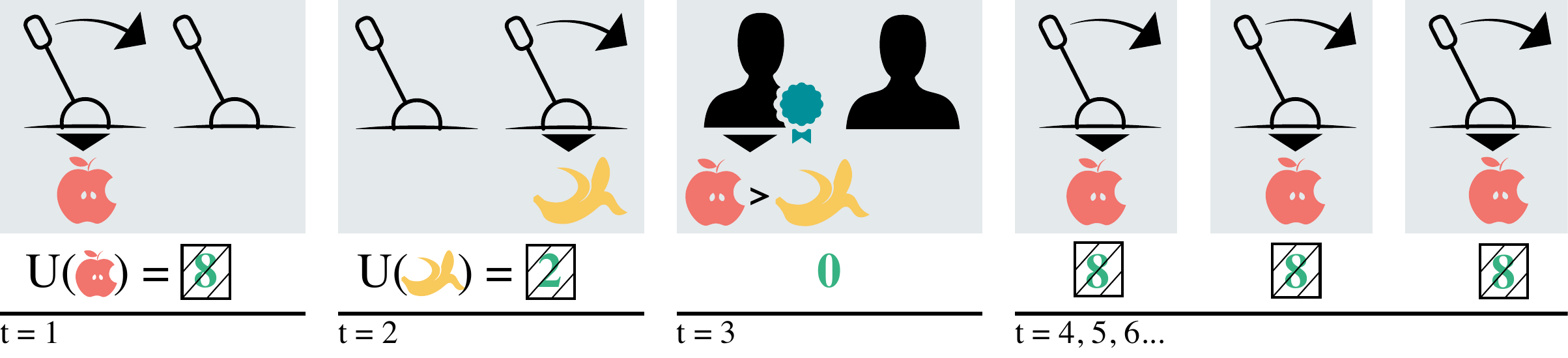}
         \caption{A simple \emph{Hidden Utility Bandit (HUB)} with two arms and two teachers. The agent pulls the first arm, observes an apple, and receives the apple's utility of $8$ without observing it. The agent then pulls the second arm, observes a banana, and receives the banana's utility of $2$ without observing it. Because these utilities are hidden, the agent foregoes the opportunity for utility on the third timestep to ask the expert teacher which fruit is better. The expert replies that apples are better than bananas, so the agent pulls the first arm to maximize apples for all remaining timesteps.}
         \label{fig:front}
     \end{figure*}
    
    The HUB framework strictly generalizes standard multi-armed bandits by making arm utilities latent and learnable only through teacher feedback (Section~\ref{sec:preliminaries}), while remaining more structured and tractable than fully general cooperative inverse reinforcement learning formulations (Section~\ref{sec:relatedworkcontributions}). We derive upper and lower bounds on the number of teacher queries needed to recover the hidden utilities with high confidence, characterizing how query complexity scales with teacher noise, the number of items, and the required confidence level (Section~\ref{sec:queries}). We deliberately assume that the utility function parameter space is small enough to compute precise Bayesian belief updates, even though in practice reward functions are often approximated by neural networks. This allows us to isolate the effect of teacher selection on inference and downstream performance without conflating it with optimization or function-approximation issues.
    

    Building on this formalism, we propose \textit{Active Teacher Selection} (ATS), a class of solution methods that convert the HUB to a partially observable Markov decision process (POMDP) and make teacher selection part of the planning problem (Section~\ref{sec:ats}). ATS maintains a belief over utilities and arm outcome distributions and uses a Monte Carlo tree-search planner (POMCPOW) with tailored rollout policies to select actions that approximately maximize expected discounted utility. Unlike naive two-phase procedures that first gather feedback and then exploit a fixed estimate, ATS interleaves exploration and exploitation, naturally trades off cost against accuracy across teachers, and does not require problem-specific exploration schedules or hand-picked “reference” teachers. We also show how to estimate teacher rationality parameters from empirical preference data, so that ATS can operate even when teacher noise levels are not known a priori (Section~\ref{sec:beta}).

    Since there are no existing solutions to the novel HUB problem, we introduce multiple families of baseline methods and evaluate these against ATS on a realistic recommendation task in Section~\ref{sec:experiments}\footnote{Code to reproduce our experiments is available at \texttt{github.com/[redacted]/ATS}.}. ATS outperforms methods with fixed exploration windows, demonstrating the usefulness of selecting \textit{when} to query teachers, and ATS with specific teacher selection outperforms general teacher selection, demonstrating the usefulness of selecting \textit{which} teacher to query. As a proof-of-concept, we also demonstrate application of this framework to the real-world problem of evaluating COVID-19 vaccines with expensive and unreliable tests in Section~\ref{sec:exp}. Our HUB framework and ATS algorithm demonstrate the importance of leveraging differences between teachers to learn accurate reward models and will facilitate future work on scalable reward learning algorithms that learn accurate, robust and value-aligned models from diverse teachers.

\section{Preliminaries}\label{sec:preliminaries}

    \paragraph{Reward Learning}

    The goal of \textit{reward learning} is to estimate a function $\mathcal{\hat{U}} : \mathcal{I}\rightarrow\mathbb{U}$ mapping alternative items from the set $\mathcal{I}$ to scalar utility or reward values in settings where they cannot be observed directly. We focus on the most common type of reward learning, \textit{preference learning}, in which the reward function is inferred from human preference comparisons between pairs of alternatives. For example, the teacher in Figure~\ref{fig:front} compares the alternatives $\{apple,\,banana\}$ and selects $apple$. The AI can then infer that $apple$ likely has higher reward.
    
    Because human teachers can make mistakes, they are typically modeled as noisily rational, choosing alternative $i$ over $j$ with probability $\Pr(i\succ j)$ rather than deterministically. In the reward learning literature, human feedback is most often modeled as Boltzmann-rational~\citep{rajkumar2014statistical,Jeon2020-ir}, where the probability that a teacher with rationality parameter $\beta\in[0,\infty)$ prefers item $i$ to $j$ is:
        \begin{equation}\label{eq:bolt}
            \Pr(i \succ j; \beta,\mathcal{U}) = \frac{\exp(\beta \mathcal{U}(i))}{\exp(\beta \mathcal{U}(i)) + \exp(\beta \mathcal{U}(j))},
        \end{equation}
    where $\mathcal{U} : \mathcal{I}\rightarrow\mathbb{R}$ gives the true utility of all items in set $\mathcal{I}$. While Boltzmann-rationality does not fully model all nuances of human decision-making~\citep{lindner2022humans}, it does capture the important property that humans are more likely to make mistakes when $|\mathcal{U}(i)-\mathcal{U}(j)|$ is small and therefore the comparison is harder~\citep{Barnett2023-um}, which likely accounts for its empirical success and pervasive use in practice. It is now the dominant model for safety finetuning large language models (see for example ~\citep{bai2022training}), so we will use it in this work.


    While existing reward inference work typically assumes that the teacher rationality hyperparameter $\beta$ is known, we find this to be an unrealistic assumption, and therefore show how to infer an estimate $\hat{\beta}$ in Section~\ref{sec:beta}. Reward learning systems also typically assume that all feedback is generated by a single Boltzmann-rational teacher model as described in Equation~\ref{eq:bolt}, despite differences in teacher expertise~\citep{Daniels-Koch2022-me} or context~\citep{pitis2024improving,siththaranjan2023distributional}. In this work we relax this assumption, modeling differences between teachers and the process of selecting between them.

    \paragraph{Teacher Selection}
    

        Given a set of different teachers, the reward inference agent must choose which teacher(s) to query so as to accurately infer the underlying reward function $\mathcal{U}$.
        We develop a framework to formalize this problem in Section~\ref{sec:HUB}, building off of the related frameworks of multi-armed bandits and partially-observable markov decision processes.

        \textit{Multi-armed bandits} (MAB) are stateless sequential decision-making problems~\citep{Robbins1952-wj} that model choices between fixed sets of alternatives. At each timestep the agent chooses one of $K$ arms, each with a distribution over utilities. When the agent pulls arm $k\in K$, it receives utility sampled from arm $k$'s distribution $u\sim \mathcal{D}^k$. The agent's goal is to maximize its expected cumulative utility. Our framework is similar, though arm utilities are hidden (as in many real-life applications), and the agent must learn about them from teacher preferences (as in reward learning).


        \textit{Partially observable Markov decision processes (POMDPs)} are sequential decision-making problems where elements of the world state (in this case, the underlying reward function $\mathcal{U}$), can be hidden from the agent~\citep{Littman1995-cp}. A POMDP problem is a tuple $\langle\mathcal{S},\mathcal{A},\mathcal{T},\mathcal{R},\mathcal{O},\Omega,\gamma\rangle$, where $\mathcal{S}$ and $\mathcal{A}$ are the state and action spaces, $\mathcal{T}$ and $\mathcal{R}$ are the transition and reward functions, and $\gamma$ is the discount factor. At time $t$, the agent begins in state $s_t$, takes action $a_t$, transitions to state $s_{t+1}$ determined by $\mathcal{T}(s_t, a_t)$ and receives reward $r_t = R(s_t, a_t, s_{t+1})$. Rather than observing states directly, the agent observes an observation $\omega_{t+1}$ from the observation space $\mathcal{O}$ determined by the observation function $\Omega(s_{t+1},a_t)$. A POMDP solution is a policy that balances inferring the underlying state and acting in the environment to maximise expected cumulative reward. 
        
        While calculating this solution is typically intractable, approximate POMDP algorithms can perform well. Partially observable Monte Carlo planning (POMCP) algorithms produce time-efficient online solvers that form a belief tree of fixed depth then use rollouts to estimate the values of the leaf nodes~\citep{Silver2010-fq}. In this work we will show how to formulate the teacher selection problem as a POMDP, then solve it using partially observable Monte Carlo planning with observation widening (POMCPOW), a POMCP-style algorithm that uses a weighted particle filter to efficiently produce approximate solutions for problems with large state spaces~\citep{Sunberg2017-wq}. We develop problem-specific rollout policies to customize POMCPOW in Appendix~\ref{app:hyperparam}.

\section{Related Work and Contributions}\label{sec:relatedworkcontributions}
To the best of our knowledge, the teacher selection problem is novel and so there are no pre-existing solutions. However, related work studies learning from heterogeneous teachers, aggregating differing teacher values, and human-AI cooperation on assistance problems. We overview research in each of these areas, discuss their relationship to our formalism and algorithms, and then outline the novel contributions of this work.

    \subsection{Related Work}
    

        \paragraph{Teacher Heterogeneity}

        Several prior papers model teacher heterogeneity in reward inference, but few of these models permit active teacher selection. 
        \citet{siththaranjan2023distributional} model teacher populations as distributions rather than collections of individuals, which prevents selecting individual teachers. 
        \citet{pitis2024improving} finetune reward models to consider information about context that may cause variation in annotator preferences and 
        \citet{poddar2024personalizing} learn individual latent variables for each user, but these assume that the context or user is fixed.
        \citet{shirali2025direct} prove that it is impossible to represent a mixture of Boltzmann-rationality models (representing a set of diverse population of teachers) with a single Boltzmann model, underscoring the importance of modeling teacher heterogeneity, but assume that the teacher distribution is already chosen.

        The HUB framework differs from this prior work along three axes. First, it models teachers as individuals with distinct rationality and cost parameters, rather than as draws from a distribution as in~\citep{siththaranjan2023distributional}, enabling active selection of which teacher to query. Second, it treats teacher selection as a sequential decision-making problem with cost-aware planning, rather than assuming the teachers are given as in~\citep{pitis2024improving}, \citep{poddar2024personalizing}, and \citep{shirali2025direct}. Third, it interleaves inference and action as an online sequential process, rather than performing reward learning in a separate phase as in~\citep{siththaranjan2023distributional}, and~\citep{pitis2024improving}, \citep{poddar2024personalizing}. 

        The most closely related works are \citep{Daniels-Koch2022-me} and \citep{Barnett2023-um}, both of which model teacher selection amongst teachers with variation in expertise. 
        However, \citet{Daniels-Koch2022-me} use a simple heuristic for teacher selection (maximizing rationality) rather than weighing expected informativeness against cost as the active teacher selection (ATS) algorithm we propose does. Section~\ref{sec:ats} discusses the limitations of this heuristic. 
        \citet{Barnett2023-um} develop a greedy, value-of-information-based teacher selection algorithm that selects the teacher whose single query would most reduce expected belief error. This myopic strategy can be understood as a single-step-lookahead approximation to ATS's multi-step planning. However, it operates in a pure-inference setting without arm-pulling, query costs, or an exploitation objective. ATS extends this approach to multi-step planning in the HUB domain, which requires jointly deciding when to gather information (explore) and when to collect reward (exploit).
    
        \paragraph{Value Heterogeneity}
        In this work, we address the case where teachers share an underlying utility function, and differences in their preference feedback arises from differences in their expertise or context on the problem. 
        However, in some cases teacher \textit{values} themselves vary. \citet{fleisig2023majority} and \citet{zhang2024diverging} model value disagreement amongst human annotators, \citet{siththaranjan2023distributional} and \citet{shirali2025direct} explore the shortcomings of aggregating these diverse perspectives implicitly, and \citet{freedman2026} propose a concrete method for explicitly representing and integrating diverse stakeholder values  when filtering AI output.
        Deciding how to aggregate these diverse sets of values is an open problem in social choice theory, and thus beyond the scope of this work, but see \citet{conitzer2024social} for discussion of how to integrate social choice theory and reward learning.


    
        \paragraph{Assistance Games} \textit{Assistance games} are problems in which an AI system and a human must collaborate to achieve the human's goals~\citep{hadfield2016cooperative, malik2018efficient}. The AI system cannot observe the human's goals directly, so optimal human behavior often involves teaching the AI~\cite{Milli2020-fz}. HUB problems can be viewed as a specific class of assistance games in which there are multiple teachers, but they can only act (by providing feedback) when the agent requests it (by querying them). However, assistance games are DEC-POMDPS, which are NEXP-complete and thus functionally intractable~\citep{bernstein2002complexity}. By fixing the teacher policy and arm distributions, the HUB framework reduces the problem to a much more tractable POMDP with a stationary transition function. Optimal AI solutions to the assistance game balance inference and control to produce qualitatively valuable behaviors, such as only asking the human questions when 
        necessary~\citep{shah2020benefits}. Our ATS algorithm leverages this insight by actively deciding when to query teachers, rather than doing so on a fixed schedule as in traditional reward learning. 
        

    \subsection{Contributions}

        We state our contributions and how we overcome limitations in prior work: 

        \begin{enumerate}
            \item We explicitly model differences in teacher rationality and query cost, rather than assuming that differences in teacher feedback are due to additional "context" of arbitrary form. This model allows algorithms to reason about tradeoffs between cost and informativeness of querying specific teachers.
            \item We define the \textit{hidden utility bandit} (HUB), a novel problem formalism for the teacher selection problem (see Section~\ref{sec:HUB}). The HUB is more expressive than a MAB, but can be converted to a POMDP for tractability.
            \item We define a procedure for estimating the teacher rationality parameter $\beta$, rather than requiring it to already be known (see Section~\ref{sec:beta}). 
            \item We apply active learning to develop a novel class of \textit{active teacher selection} (ATS) solution methods that leverage teacher models to efficiently trade off the informativeness of feedback and query costs (see Section~\ref{sec:ats}).
            \item We provide a case study demonstrating how to apply the HUB framework and ATS algorithm to a real-world vaccine testing problem (see Section~\ref{sec:exp}.
        \end{enumerate}

\section{Hidden Utility Bandits} \label{sec:HUB}

    We design the \textit{Hidden Utility Bandit} (HUB) framework to formalize the problem of reward learning from multiple teachers. Formally, a HUB is a partially-observable sequential decision-making problem consisting of a set of items (each with a distinct utility), a set of arms (each with a fixed distribution over items), and a set of Boltzmann-rational teachers (each with a rationality parameter and cost). At each step of the HUB problem, the agent chooses between either pulling an arm, observing an item sampled from that arm's distribution and receiving \textit{but not observing} that item's utility, or querying a teacher, receiving feedback modulated by that teacher's rationality parameter but incurring that teacher's query cost.  
    

    \vspace{10pt}
    \begin{definition}
        A \textbf{\textit{hidden-utility bandit} (HUB)} is a tuple $\langle \mathcal{I},\mathcal{U},\mathcal{C},\beta,F,Q,\gamma \rangle$:\nopagebreak[5]\\*
        \begin{itemize}[noitemsep,topsep=0pt,leftmargin=0.3in]
            \item $\mathcal{I}$ is a set of $N$ \textit{items}, each with a hidden utility.
            \item $\mathcal{U} : \mathcal{I}\rightarrow [u_\mathrm{min},u_\mathrm{max}]$ is a \textit{utility function} over $\mathcal{I}$, where $\mathbb{U}$ is the utility function space.
             \item $\mathcal{C} = \{c^1,c^2,\ldots,c^K\}$ is a set of $K$ \textit{arm choices}, each associated with an \textit{arm distribution} $\mathcal{D}^k: \mathcal{I} \rightarrow [0, 1]$ giving the probability of returning each item in $\mathcal{I}$, where $\mathbb{D} = \mathbb{D}^1 \times \mathbb{D}^2 \times \dots \times \mathbb{D}^K$ is the joint arm distribution space over all arm choices $\mathcal{C}$.
            \item $\beta=\{\beta^1,\beta^2,\ldots,\beta^M\}$ is $M$ \textit{teacher rationality parameters}.
            \item $F=\{f^1,f^2,\ldots,f^M\}$ is $M$ \textit{teacher query costs}.
            \item $Q : \mathcal{I}\times\mathcal{I}\rightarrow[0,1]$ is a \textit{query profile} that gives probabilities of picking queries in $\binom{\mathcal{I}}{2}$.
            \item $\gamma$ is a discount factor.
        \end{itemize}
    \end{definition}
    
    \noindent The agent can observe $\mathcal{I}$, $\mathcal{C}$, $\beta$, $F$, $Q$, and $\gamma$ but cannot observe the utility function $\mathcal{U}$ or the arm distributions $\mathcal{D}$. At each timestep $t$, the agent can select an arm choice $c_t \in \mathcal{C}$ or a teacher rationality parameter $\beta_t \in \beta$. If the agent pulls an arm choice $c_t \in \mathcal{C}$, it observes an item $i_t$ sampled from the arm distribution $\mathcal{D}^{c_t}$ and receives but does \textit{not} observe the utility $u_t = \mathcal{U}(i_t)$. Conversely, if the agent queries a teacher with rationality parameter $\beta_t \in \beta$, it receives and observes an item pair $(i, j)$ sampled from the query profile $Q$, a preference $p_t$ sampled from $\mathrm{Bernoulli}(P)$ given the probability $P = \Pr(i \succ j; \beta_t, \mathcal{U})$ in Equation~\ref{eq:bolt}, and the teacher query cost $u_t = f^{\beta_t}$. In this work we use a simple query profile that selects queries randomly in order to focus on the teacher selection problem. However, the query profile can easily be adjusted to incorporate more complex sampling strategies.

    The agent's objective is to maximize the expected discounted sum of utilities $\mathbb{E}[\Sigma_{t=0}^\infty\gamma^tu_t]$. To do this, it must balance querying teachers to learn about the utility function with selecting bandit arms to earn utility. Standard reward learning systems alternate between fitting a reward model to teacher feedback and learning a policy using the reward model on a predefined schedule. However, the HUB framework allows the agent to interweave these processes to optimize performance.

    \textbf{Example: Paper Recommendation}

            \begin{figure*}[t]
        \centering
        \includegraphics[width=\linewidth]{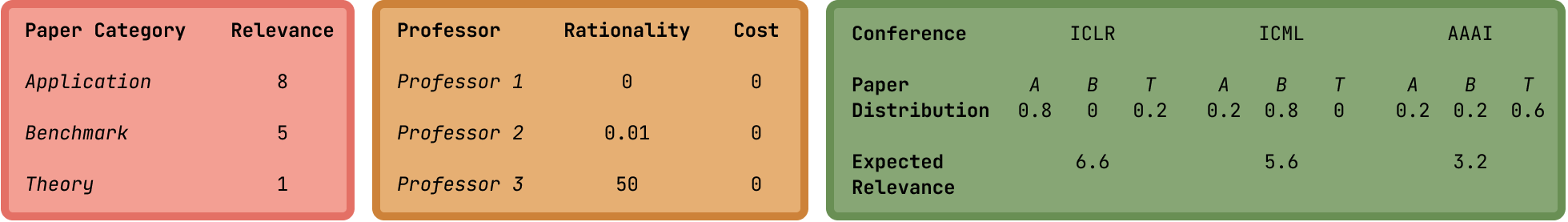}
        \caption{Paper recommendation as a HUB problem. Paper categories (Application, Benchmark, Theory) are items ($\mathcal{I})$, professors are teachers with rationality ($\beta$) and cost ($F$) parameters, conferences are arms with distributions ($\mathcal{D})$, and relevance scores are utilities ($\mathcal{U}$). The goal is to recommend the most relevant conferences to read papers from.}
        \label{fig:conf}
    \end{figure*}

        For example, consider a recommender system tasked with recommending AI conference papers to a PhD student. Imagine that there are three types of papers (Application, Benchmark, Theory) and three conferences (ICLR, ICML, AAAI) that produce different distributions of paper types. Some types of papers are more relevant to the student's research than others, but the recommender system doesn't initially know how to distinguish these. Instead, it must learn which paper types are most relevant by asking professors, whose judgements vary from completely random ($\beta^1=0$) to highly accurate ($\beta^3=50$). Each day, the system either recommends a conference to the student, in which case a paper is sampled from that conference's distribution, and the system earns a hidden utility score representing that paper's type's relevance, or picks a professor to ask for feedback, in which case that professor compares two paper types and provides a preference. (For simplicity, we assume that the paper types the professor compares are chosen randomly, and that the only cost of asking a professor for feedback is the opportunity cost of the student not reading a paper that day.) 
        
        Applying the HUB framework, paper categories are the item set $\mathcal{I}=\{A, B, T\}$, relevance scores are the hidden utility function $\mathcal{U}$, conferences are arm choices $\mathcal{C}=\{c^1=\mathrm{ICLR}, \,c^2=\mathrm{ICML}, \,c^3=\mathrm{AAAI}\}$, and professors are teachers with rationality $\beta = \{\beta^1 = 0,\beta^2 = 0.01,\beta^3 = 50\}$. Figure~\ref{fig:conf} shows an example paper recommendation HUB problem.\footnote{Example relevance scores and paper category compositions were selected arbitrarily.} In this problem, the system must learn how much more relevant \textit{Application} papers are than \textit{Benchmark} papers in order to determine whether to recommend \textit{ICLR} or \textit{ICML}. Without this information, the system cannot distinguish between cases where $\mathcal{U}(A)=8$ (indicating that the expected relevance of \textit{ICLR} is greater than \textit{ICML}) and where $\mathcal{U}(A)=6$ (indicating the reverse). Interestingly, in this example it will sometimes be more informative for the recommendation system to query the noisy Professor 2 over the more rational Professor 3, because the frequency with which a noisy teacher prefers a lower-reward item over a higher-reward one gives information about the \textit{magnitude} of the difference between the rewards.

    \subsection{Naive HUB Inference}\label{sec:naive}

        We first propose a simple baseline algorithm: naive HUB inference (Algorithm~\ref{alg:naive}). 
        Naive HUB inference explores randomly for a fixed number of timesteps $T$ (lines 1-12), performs a frequentist estimate of the expected utility of each arm based on those $T$ observations (lines 13-17), then greedily selects the highest-expected-utility arm for the remainder of the task. 
        For example, a paper recommendation system following the naive algorithm will randomly recommend conferences and choose professors for the first $T$ days, pause and calculate which conference's papers appear to be most relevant according to the professor feedback it has gotten so far, and then recommend only the conference that it infers to be most useful going forward.

         \begin{algorithm*}[t]
            \small 
            \renewcommand{\algorithmicensure}{\textbf{Initialize:}}

            \caption{$\textsc{NaiveHUBInference}(\cdot)$}\label{alg:naive}
            \begin{algorithmic}[1]
                \Require HUB $\langle \mathcal{I},\mathcal{U},\mathcal{C},\beta,F,Q,\gamma\rangle$, $u_\mathrm{min}$, $u_\mathrm{max}$, $T$ samples, $\beta^m$ of selected teacher
                \Ensure $\mathrm{frequency}[c]$, $\mathrm{frequency}[c][i]$, $\mathrm{frequency}[b][q]$, $\mathrm{preferences}[b][q]$
                \For {$t=1,\ldots,T$} 
                    \If{\texttt{sampleUniformly}(\{\textsc{True}, \textsc{False}\})}
                        \State sample $c\sim\mathcal{C}$ \Comment{Sample arm uniformly at random}
                        \State sample $i\sim \mathcal{D}^c$ \Comment{Sample item from (unobserved) arm distribution}
                        \vspace{2px}
                        \State $\mathrm{frequency}[c] \leftarrow \mathrm{frequency}[c]+1$
                        \State $\mathrm{frequency}[c][i]\leftarrow \mathrm{frequency}[c][i]+1$
                    \Else
                        \State sample $b\sim\beta$ \Comment{Sample teacher uniformly at random}
                        \State sample $q = (i, j) \sim Q$ \Comment{Sample query from query profile}
                        \State sample $p\sim \mathrm{Bernoulli}(\Pr(i \succ j;b,\mathcal{U}))$ \Comment{Sample preference given Equation~\ref{eq:bolt}}
                        \vspace{2px}
                        \State $\mathrm{frequency}[b][q] \leftarrow \mathrm{frequency}[b][q]+1$
                        \State $\mathrm{preferences}[b][q] \leftarrow \mathrm{preferences}[b][q]+p$
                    \EndIf
                \EndFor
                \vspace{2px}
                \State $\hat{D}^c(i) \leftarrow \frac{\mathrm{frequency}[c][i]}{\mathrm{frequency}[c]}\quad\forall c\in\mathcal{C},\,i\in\mathcal{I}$   \Comment{Estimate arm distributions}
                \State $\hat{P}(b,q) \leftarrow \frac{\mathrm{preferences}[b][q]}{\mathrm{frequency}[b][q]}\quad\forall b\in\beta,\,q\in Q$ \Comment{Estimate preference probabilities}
                \vspace{2px}
                \State $\Delta_{ij} = -\frac{1}{\beta^m} \ln \left[ \frac{1}{\hat{P}(\beta^m,\, q=(i, j))} - 1 \right] \quad\forall i,j\in\mathcal{I}$ 
                \vspace{2px}
                \State $(x, y) \leftarrow \argmax_{x, y} \left[ \Delta_{xy} \right]$ \Comment{Find indices of maximum element}
                \vspace{2px}
                \State $\hat{\mathcal{U}}(y) \leftarrow u_\mathrm{min},\quad\hat{\mathcal{U}}(i) \leftarrow \left[\frac{u_\mathrm{max}}{u_\mathrm{max}-u_\mathrm{min}}\right] \Delta_{iy} + u_\mathrm{min}\quad\forall i\in\mathcal{I}\setminus\{y\}$ \Comment{Estimate utilities}
            \end{algorithmic}
        \end{algorithm*}


        This allows the agent to infer hidden information: the joint arm distribution $\mathcal{D}^\mathcal{C} = (\mathcal{D}^{1},\mathcal{D}^{2},\ldots,\mathcal{D}^{K})$ (common to stochastic multi-armed bandit problems) and utility function $\mathcal{U}$ (unique to the HUB). Morever, despite the simplicity of Algorithm~\ref{alg:naive}, it is possible to prove that it converges to the ground truth utility function $\mathcal{U}^*$ and arm distribution set $\mathcal{D}^{\mathcal{C}*}$ in the limit of infinite queries. We prove the following theorem in Appendix~\ref{app:proof1}:
 
        \begin{theorem}\label{the:convergence}
            If the predicted utility function $\hat{\mathcal{U}}$ and the predicted arm distribution $\hat{\mathcal{D}^{\mathcal{C}}}$ are estimated by executing Algorithm~\ref{alg:naive} with $T$ samples, then $\hat{\mathcal{U}}\rightarrow\mathcal{U}^*$ and $\hat{\mathcal{D}^{\mathcal{C}}}\rightarrow\mathcal{D}^{\mathcal{C}*}$ as $T\rightarrow\infty$.
        \end{theorem}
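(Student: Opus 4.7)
The plan is to apply the strong law of large numbers (SLLN) to each source of random samples drawn inside Algorithm~\ref{alg:naive} and then propagate that convergence through the deterministic, continuous transformations used to produce $\hat{\mathcal{D}^{\mathcal{C}}}$ and $\hat{\mathcal{U}}$. The two sources of randomness are (i) arm pulls, where an arm $c\in\mathcal{C}$ is chosen uniformly and an item is drawn from $\mathcal{D}^{c*}$, and (ii) teacher queries, where a teacher $b\in\beta$ and a query $q\in{\mathcal{I}\choose 2}$ are sampled and a preference is drawn from the Boltzmann model instantiated with the true $\mathcal{U}^*$.

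First, I would verify that every arm $c$ and every teacher--query pair $(b,q)$ in the support of $Q$ is visited infinitely often almost surely as $T\to\infty$. Because the outer coin flip, the uniform choice of arm or teacher, and the query profile $Q$ each place strictly positive probability on the relevant events, $\mword{frequency}[c]$ and $\mword{frequency}[b][q]$ grow to infinity almost surely by the SLLN applied to i.i.d.\ Bernoulli trials. Conditional on these counts diverging, the per-arm empirical item frequencies are i.i.d.\ draws from $\mathcal{D}^{c*}$, so a second application of the SLLN gives $\hat{\mathcal{D}}^{c}(i)\to\mathcal{D}^{c*}(i)$ almost surely for every $c,i$, which settles the arm-distribution half of the theorem.

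Next, I would apply the SLLN to each $\hat{P}(b,q)$, a sample mean of i.i.d.\ Bernoulli draws with parameter $\Pr(i\succ j;b,\mathcal{U}^*)\in(0,1)$, to conclude $\hat{P}(b,q)\to\Pr(i\succ j;b,\mathcal{U}^*)$ almost surely. I would then invoke the continuous mapping theorem on Equation~\ref{eq:pref-diff} to obtain $\hat{\Delta}_{ij}\to\Delta^*_{ij}$ for every pair $(i,j)$ in the support of $Q$. Because $\mathrm{argmax}$ is continuous at points where the maximizer is unique, and the affine rescaling on the final line of the algorithm is continuous in $\hat{\Delta}$, a final continuous-mapping step yields $\hat{\mathcal{U}}(i)\to\mathcal{U}^*(i)$ for every $i\in\mathcal{I}$.

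The main obstacle will be the singularity of $\ln\bigl[\hat{P}(b,q)^{-1}-1\bigr]$ near the boundary $\{0,1\}$: for finite $T$, an empirical preference of exactly $0$ or $1$ makes $\hat{\Delta}_{ij}$ diverge, so the map is not continuous on the closed interval $[0,1]$. I would dispatch this by noting that under the Boltzmann model with each $\beta^m\in[0,\infty)$ the true probability lies in the open interval $(0,1)$, so almost surely $\hat{P}(b,q)$ stays bounded away from $\{0,1\}$ for all sufficiently large $T$; on that event the logarithm is smooth and the continuous mapping argument goes through. A secondary issue is ties in $\Delta^*_{xy}$, which make $\mathrm{argmax}$ multivalued, but since every item $y$ achieving the minimum true utility produces the same reconstructed $\hat{\mathcal{U}}$ in the limit, a consistent tie-breaking rule suffices.
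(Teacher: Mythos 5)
Your proof follows essentially the same route as the paper's own (sketch) proof: the strong law of large numbers gives convergence of the empirical arm distributions and of the empirical preference frequencies $\hat{P}(b,q)$ to their true values, and the utilities are then recovered by deterministically inverting the Boltzmann model via Equation~\ref{eq:pref-diff}. You are in fact more careful than the paper, which does not address infinitely-often visitation, the boundary singularity of $\ln\bigl[\hat{P}^{-1}-1\bigr]$, or ties in the $\mathrm{argmax}$; the one caveat neither argument flags is that the inversion requires the selected teacher to satisfy $\beta^m>0$ (a $\beta^m=0$ teacher's preferences carry no utility information, and the formula divides by $\beta^m$).
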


        However, exploring randomly for a fixed number of timesteps and querying a fixed teacher may be suboptimal. By maintaining and updating an internal belief over the hidden information, the agent can instead query teachers only when teacher feedback is necessary to update its belief. 

     \subsection{Query Sample Complexity}\label{sec:queries}

        We evaluate the efficiency of learning hidden utilities $\mathcal{U}$ from teacher queries by computing upper and lower bounds on teacher query sample complexity. Theorem~\ref{the:upper} (proved in Appendix~\ref{app:proof2}) upper bounds queries required for a confident estimation as a function of the number of correct classifications required to learn $\mathcal{U}$, which is nontrivial to calculate but can be estimated from our naive baseline experiments (described in Section~\ref{sec:beta}). Theorem~\ref{the:lower} (proved in Appendix~\ref{app:proof3}) lower bounds queries required to receive correct classifications for each query pair at least once, assuming that the teacher is queried about each pair until a correct answer is received.

        \begin{theorem}\label{the:upper}
            We can bound the maximum number of teacher queries $t$ required to learn the hidden utilities $\mathcal{U}$ with $\overline{\kappa}$ confidence in the worst case as $t\leq\frac{r(1-p)}{p\overline{\kappa}}$, where $r$ is the number of successful classifications from teachers required to learn the hidden utilities, and $p = \min_{\beta, \Delta_{ij}} \frac{1}{1+\exp(-\beta(\Delta_{ij}))}$ is the worst-case teacher accuracy.
        \end{theorem}

        \begin{theorem}\label{the:lower}
            We can bound the minimum number of teacher queries $t$ required to receive a correct answer for each query pair with $\overline{\kappa}$ confidence in the worst case as $t \geq \frac{\log(\frac{k}{p})N(N-1)}{2\log(1-p)},$
            where $N$ is the number of items, and $p = \min_{\beta, \Delta_{ij}} \frac{1}{1+\exp(-\beta(\Delta_{ij}))}$ is the worst-case teacher accuracy.
        \end{theorem}

        Both bounds depend on the worst-case teacher accuracy $p$, which is a function of the product of the teacher's rationality $\beta$ and the utility gap $\Delta_{ij}$. In order for these bounds to be informative, $p$ must be $> 0.5$. This holds under two assumptions: (1) at least one teacher has rationality $\beta > 0$, which is already implicit in any setting where querying teachers is useful, since a fully random teacher ($\beta = 0$) provides no information; and (2) the query profile $Q$ includes at least one item pair $(i, j)$ with $\mathcal{U}(i) \neq \mathcal{U}(j)$, ensuring a positive utility gap $\Delta_{ij}$. In practice, the query profile can be designed to avoid comparing items with near-identical utilities.

\section{Active Teacher Selection} \label{sec:ats}

    The \textit{Active Teacher Selection (ATS)} algorithm solves the HUB problem efficiently by maintaining a belief over the utility function and arm distributions, and choosing when to query teachers. This allows it to only query when required for decision-relevant belief updates. This alleviates the need to set the problem-specific hyperparameters in Algorithm~\ref{alg:naive} for exploration ($T$) and teacher selection ($\beta^m$).
    
    ATS also actively selects \textit{which} teacher to query. This is useful because some teachers are ``noisy'' ($\beta<\infty$), and therefore their preference probability $\Pr(i \succ j; \beta,\mathcal{U})$ correlates with the difference in utility between $i$ and $j$. This means that, for HUB problems such as the paper recommendation problem in Figure~\ref{fig:conf}, it is sometimes more informative for ATS to select teachers with \textit{lower} $\beta$ values~\citep{Barnett2023-um}. 
    
    \subsection{ATS Algorithm} \label{sec:POMDP}

        The ATS algorithm has two phases: first convert the HUB to a simplified partially observable Markov decision process (POMDP)~\citep{Littman1995-cp}, then solve it using a Monte Carlo POMDP solver with HUB-specific custom rollout policies. 

        \paragraph{Phase 1: Constructing the HUB-POMDP}

        The HUB-POMDP state contains the HUB utility function and arm distributions. The HUB-POMDP reward function gives the \textit{expected} utility of each arm according to this state. The full HUB-POMDP is specified in Definition~\ref{def:HUB-POMDP}.
        
            \begin{definition}\label{def:HUB-POMDP}
                A \textbf{hidden utility bandit POMDP (HUB-POMDP)} is a tuple $\langle \mathcal{S},\mathcal{A},\mathcal{T},\mathcal{R},\Omega,\mathcal{O}\rangle$:
                \begin{itemize}[noitemsep,topsep=0pt,leftmargin=0.3in]
                    \item $\mathcal{S} = \mathbb{U}\times\mathbb{D}$ is the state space: the state $s \in \mathcal{S}$ is a tuple $\langle\mathcal{U},\mathcal{D}^\mathcal{C}\rangle$ that is fixed.
                    \item $\mathcal{A} = \mathcal{C}\cup\beta$ is the action space: the arm choices $\mathcal{C}$ and teachers $\beta$.
                    \item $\mathcal{T}:\mathcal{S}\times\mathcal{A}\rightarrow\mathcal{S}$ is the stationary transition function: $\mathcal{T}(s,a)=s\;\forall_{s\in\mathcal{S}}\;\forall_{a\in\mathcal{A}}$.
                    \item $\mathcal{R}:\mathcal{S}\times\mathcal{A}\rightarrow\mathbb{R}$ is the reward function:
                        \[
                            \mathcal{R}(s,a) = 
                            \begin{cases}
                            \Sigma_{i\in\mathcal{I}}\mathcal{U}(i)\mathcal{D}^a(i) & \text{if $a\in\mathcal{C}$} \\
                            -f^a & \text{if $a\in\beta$} \\
                            \end{cases}
                        \]
            
                    \item $\Omega:\mathcal{I}\cup\mathbb{P}$ is the observation space: the items $\mathcal{I}$ and query-preferences $\mathbb{P}=\mathcal{I}\times\mathcal{I}\times\{0,1\}$.
                    \item $\mathcal{O}:\mathcal{A}\times\Omega\rightarrow[0,1]$ is the observation function:
                        \[
                            \mathcal{O}(a, \omega) =
                            \begin{cases}
                                \mathcal{D}^a(i) & \text{if $a\in\mathcal{C}$}  \\ Q(i,j)\Pr(i \succ j;\beta^m=a,\mathcal{U}) & \text{if $a\in\beta$}
                            \end{cases}
                        \]
                \end{itemize}
            \end{definition}

            

            Teacher selection can be \textit{general} or \textit{specific}. Under specific selection, the agent chooses which teacher to query. The HUB-POMDP's action space contains all $M$ teachers, $\mathcal{A}=\mathcal{C}\cup\beta$, as shown in the HUB-POMDP above. Under general selection, the agent chooses \textit{when} to query a teacher, but as in RLHF cannot choose \textit{which} teacher to query. The HUB-POMDP's action space is modified to contain a single general teacher selection action, $\mathcal{A}=\mathcal{C}\cup\{\beta^g\}$. 
            
            These alternatives offer a tradeoff: general selection reduces the state space size and computational complexity while specific selection provides the agent with additional control over its feedback. Our experimental results (reported in Section~\ref{sec:spec-gen}) indicate that specific greatly outperforms general teacher selection, so we will use ATS with specific teacher selection unless otherwise specified.

        \paragraph{Phase 2: Solving the POMDP}

            While exact POMDP solutions are typically intractable, approximate POMDP algorithms often perform well. \textit{Partially observable Monte Carlo planning (POMCP)} algorithms produce time-efficient online solvers that form a belief tree of fixed depth and use rollouts to estimate leaf node values~\citep{Silver2010-fq}. \textit{POMCP with observation widening (POMCPOW)} uses a weighted particle filter to efficiently produce approximate solutions for problems with large state spaces~\citep{Sunberg2017-wq}, so we adapt it to the HUB-POMDP with specialized rollout policies. We describe and compare candidate rollout policies that we designed specifically for the HUB problem in Appendix~\ref{app:hyperparam}. ATS with the custom \textit{best arm} rollout policy performs best, so we use that POMCPOW variant.

        \paragraph{Optimality}
        
            In a standard multi-armed bandit, regret is defined relative to an oracle that always pulls the best arm. However, this definition is insufficient for the HUB problem because it does not account for the value and cost of teacher queries. The natural comparator is the Bayes-optimal POMDP policy, which itself queries teachers strategically. ATS inherits asymptotic convergence guarantees from POMCPOW, meaning that as the number of simulations per decision step increases, the selected action does converge to the Bayes-optimal action for the current belief state~\citep{Silver2010-fq, Sunberg2017-wq}. ATS therefore approximates the optimal teacher selection and arm-pulling policy for any HUB instance, given sufficient computation per step. 
            
            However, characterizing \textit{finite-time} approximation bounds specific to the HUB-POMDP is non-trivial. It requires jointly reasoning about the cost of information and the value of future actions, and couples the exploration and exploitation problems in a way that classical bandit regret decompositions do not handle. We therefore validate ATS's finite-time performance empirically. Our experiments in Section~\ref{sec:experiments} demonstrate that it outperforms baselines with practical computation budgets.

    \subsection{Empirical Performance}\label{sec:experiments}

    We evaluate active, naive and random algorithms on the paper recommendation HUB task described in Section~\ref{sec:HUB}. We find that, while both active and naive algorithms successfully identify the most relevant conference in expectation (Figure~\ref{fig:arm}), ATS with specific teacher selection best balances querying teachers with recommending papers, achieving the highest average discounted cumulative reward (Figure~\ref{fig:reward}), and most accurately learning relevance scores (Figure~\ref{fig:est}). In further experiments, we compare rollout simulation policies (Appendix~\ref{app:hyperparam}), and examine the impact of varying teacher query costs (Appendix~\ref{app:cost}).


    \paragraph{Algorithms} 
        We fix ATS to use the \textit{best arm} rollout policy and \textit{specific} teacher selection for this set of experiments. (We later compare specific and general teacher selection in Section~\ref{sec:spec-gen}.) To our knowledge, the HUB problem is novel and has no solutions in prior literature, so we construct multiple families of baseline methods (\emph{random} and \emph{naive}) for comparison. \textit{Random} algorithms select actions uniformly at random from a given set. We evaluate a random algorithm that selects actions from the entire action space, as well as one that selects only arms.
        
        \textit{Naive} algorithms choose randomly amongst pulling arms and querying the selected teacher for $T$ timesteps, use these observations to estimate the arm distributions and utility function (using Algorithm~\ref{alg:naive}), then pull the arm with the highest estimated expected utility at each timestep. This baseline is chosen to represent the dominant approach in current RLHF practice: collect a fixed batch of human preference data, fit a reward model offline, then deploy a policy trained on that model.
        Although prior work has advocated for online iterative RLHF~\citep{dong2024rlhf}, most deployed systems still follow this pre-collected-data paradigm~\citep{kaufmann2024survey, casper2023open}.
        The performance gap between ATS and naive baselines demonstrates the benefit of replacing this rigid schedule with adaptive interleaving of inference and action.

    \paragraph{Experiments}
    
        We evaluate all algorithms for 25 runs of 1000 steps on 20 paper recommendation tasks. Each task is a HUB with $\mathcal{I}$, $\mathcal{C}$, and $\beta$ as described in Section~\ref{sec:HUB} and a unique tuple $\langle\mathcal{U},\mathcal{D}^\mathcal{C}\rangle$. $\mathbb{U}$ and $\mathbb{D}$ are discretized, and each task's  $\langle\mathcal{U},\mathcal{D}^\mathcal{C}\rangle$ is chosen such that $c^1$ has the highest expected relevance ($\mathbb{E}[\mathcal{U}(i\sim c^1)]>\mathbb{E}[\mathcal{U}(i\sim c^2)]\geq\mathbb{E}[\mathcal{U}(i\sim c^3)]$) and all paper distributions are different and non-deterministic ($\mathcal{D}^{j}\neq\mathcal{D}^{k}\,\,\forall_{j,k\in\mathcal{C}}$ and $\mathcal{D}^{c}(i)\neq1.0\,\,\forall_{i\in\mathcal{I},c\in\mathcal{C}}$).
        Naive algorithms require problem-specific hyperparameters $\beta^m$ and $T$, so for these experiments we select the intermediate of 3 teachers ($\beta^m=\beta^2$) and test a range of exploration horizons ($T\in[50,100,200]$).

    \paragraph{Results} \label{sec:results}

            While all non-random algorithms successfully identify the most relevant conference in expectation (Figure~\ref{fig:arm}), ATS also learns the most accurate relevance scores (Figure~\ref{fig:est}), and uses this inference to earn the highest average discounted cumulative reward on average (Figure~\ref{fig:reward}).

            Figure~\ref{fig:arm} shows how often each algorithm learns to pull the best HUB arm and therefore recommend the most relevant conference over the course of training. All HUB solution methods (ATS, Naive[50], Naive[100], Naive[200]) successfully identify the most relevant conference, recommending it about three times as often as they would if they were behaving randomly (``Random'' baseline, light green line) and about twice as often as if they were blindly recommending conferences (``Random Arms'' baseline, dark green line). This indicates that the HUB formalism can be used to accurately represent the paper recommendation problem. 

            While all solution methods identify the best arm, ATS does so most efficiently, querying teachers sparingly even at the start of the task (Figure~\ref{fig:teach}) and best optimizing the HUB objective of expected discounted cumulative reward (Figure~\ref{fig:reward}). Moreover, ATS forms the most accurate estimates of the utility function and expected conference relevance scores (Figure~\ref{fig:est}) after 1000 timesteps, while continuing to explore and potentially improve this estimate by occasionally querying teachers and recommending other conferences (Figure~\ref{fig:act-specific}). In contrast, Naive algorithms stop learning after their hand-specified exploration horizon (Figure~\ref{fig:act-naive}), and Random algorithms never learn at all (Figure~\ref{fig:act-random}). This demonstrates the benefits of actively selecting \textit{when} to query teachers, as in ATS, rather than following a predefined RLHF schedule.

                                \begin{figure*}
                \centering
                \begin{subfigure}[b]{0.31\textwidth}
                    \centering
                    \includegraphics[width=1\linewidth]{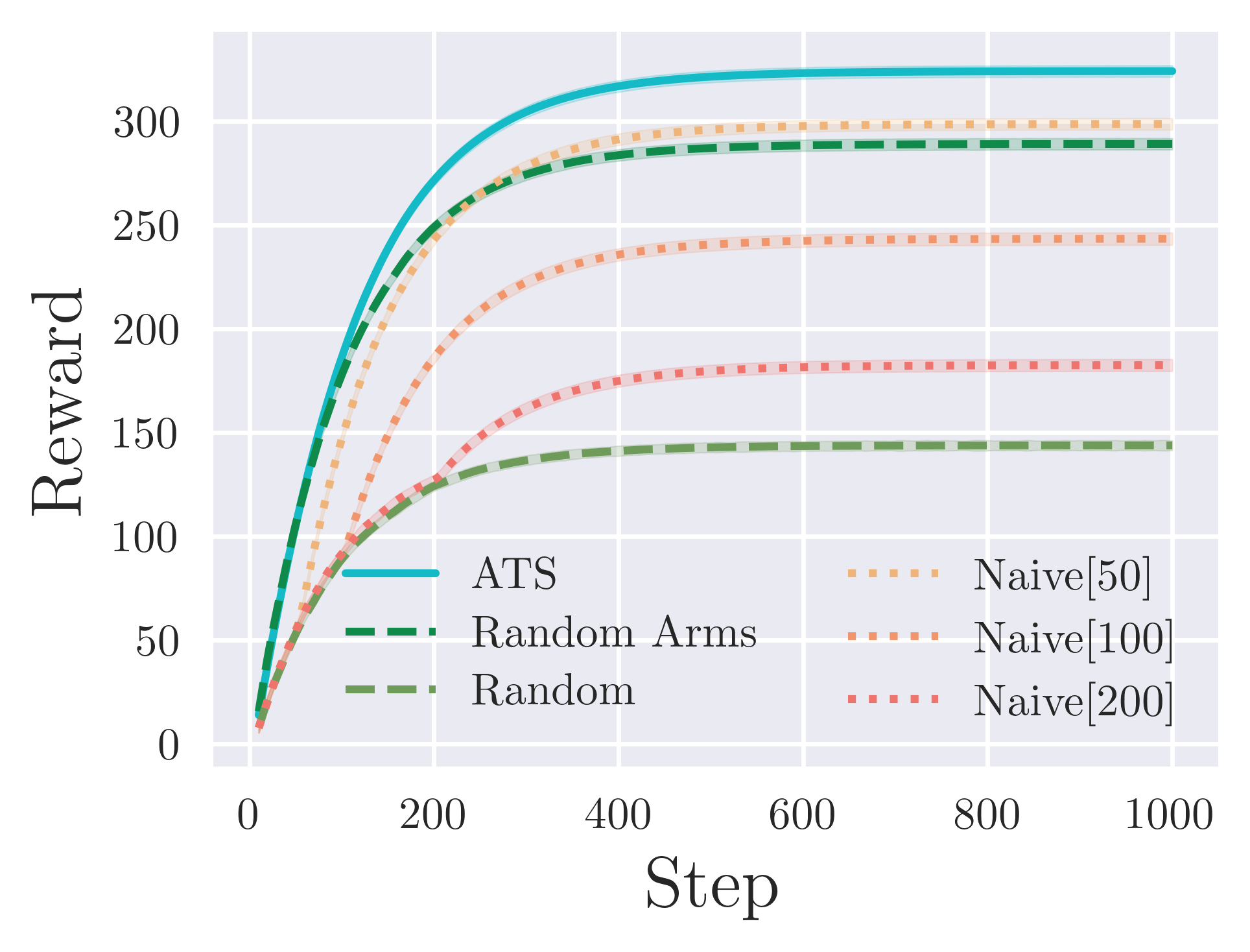}
                    \caption{Discounted cumulative reward}
                    \label{fig:reward}
                \end{subfigure}\hfill
                \begin{subfigure}[b]{0.31\textwidth}
                    \centering
                    \includegraphics[width=1\linewidth]{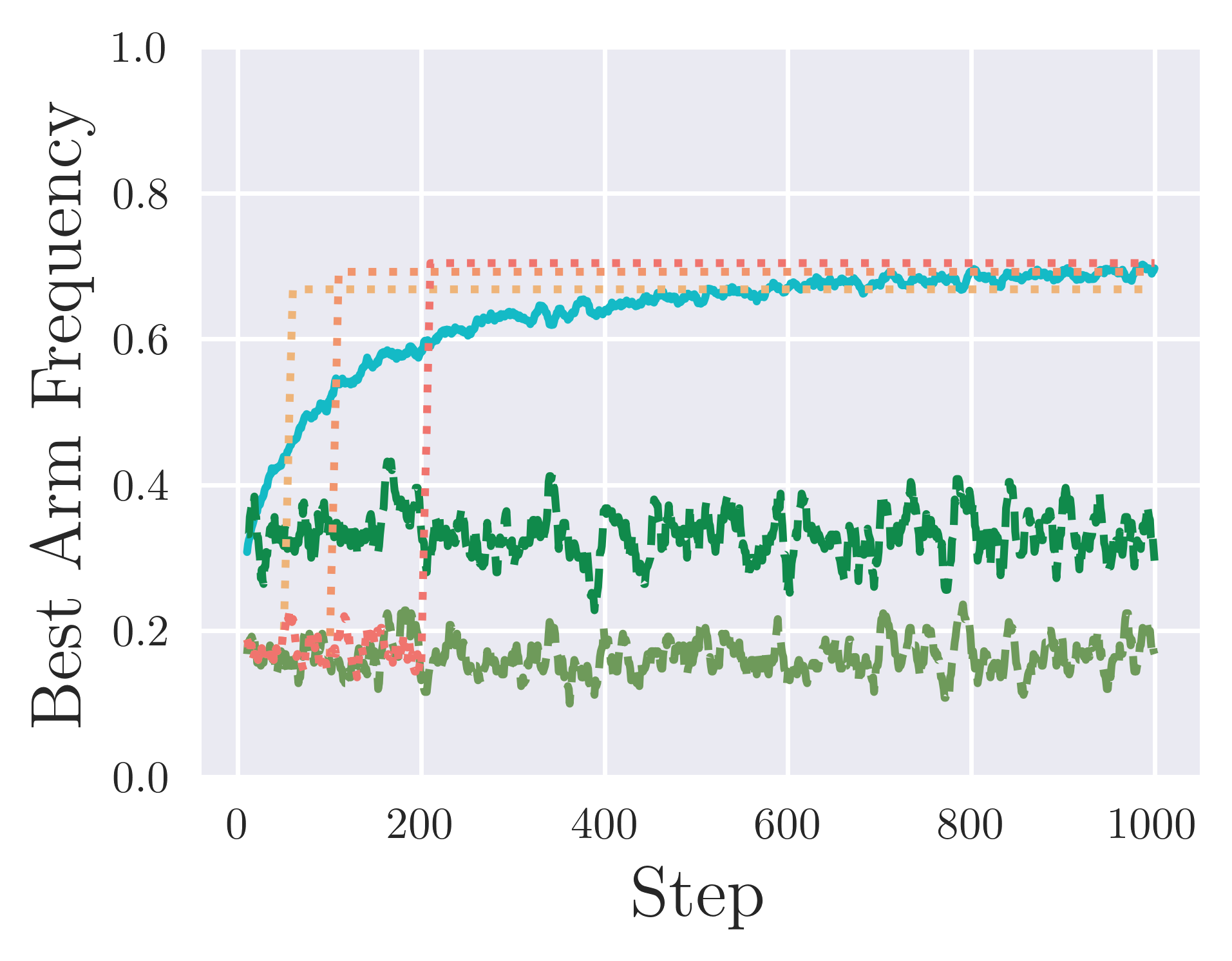}
                    \caption{Frequency of pulling best arm}
                    \label{fig:arm}
                \end{subfigure}\hfill
                \begin{subfigure}[b]{0.31\textwidth}
                    \centering
                    \includegraphics[width=1\linewidth]{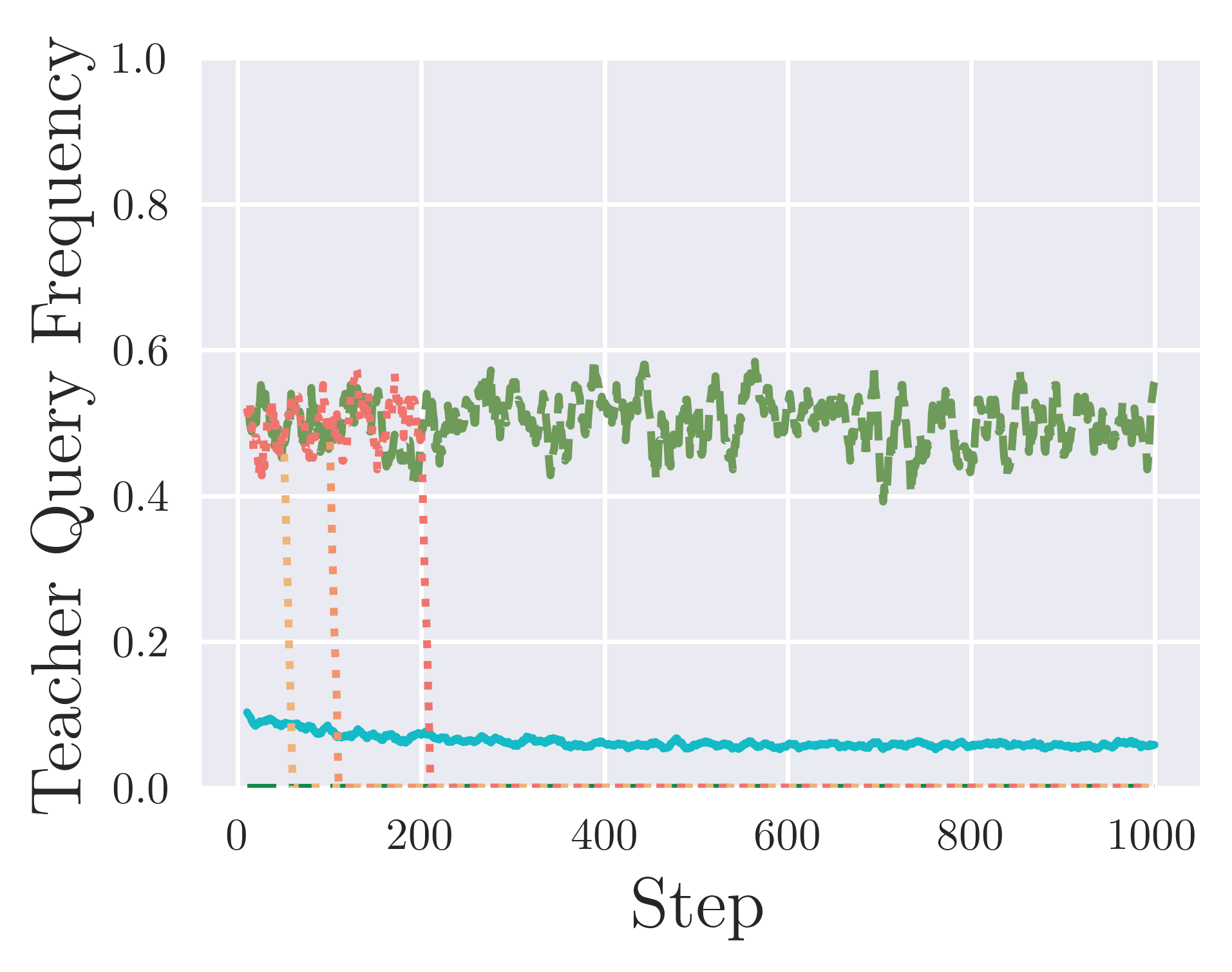}
                    \caption{Frequency of querying teacher}
                    \label{fig:teach}
                \end{subfigure}
                \caption{Comparison of ATS, naive and random algorithms. ATS best maximizes discounted reward (a) and identifies the highest-reward arm more often than most baselines and comparably with Naive[100] and Naive[200], which explore more and earn less reward (b). ATS initially queries teachers less often than naive baselines, but continues querying teachers throughout the episode (c). All data is averaged across 25 runs on 20 HUB problems and smoothed over 10 steps. 
                }
            \end{figure*}

                    \begin{figure*}[t]
                \centering
                \begin{subfigure}[b]{0.495\textwidth}
                    \centering
                    \includegraphics[width=1\linewidth]{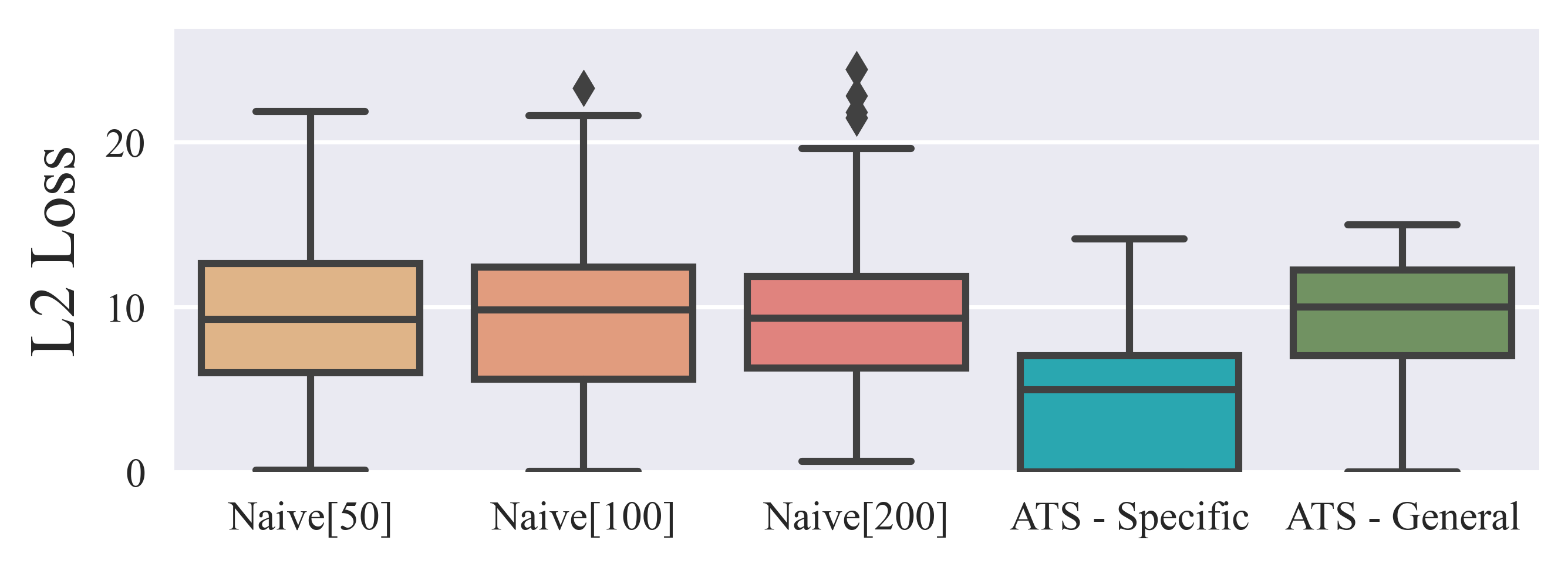}
                    \caption{$L2$ loss of $\mathcal{U}$ estimate}
                \end{subfigure}\hfill
                \begin{subfigure}[b]{0.495\textwidth}
                    \centering
                    \includegraphics[width=1\linewidth]{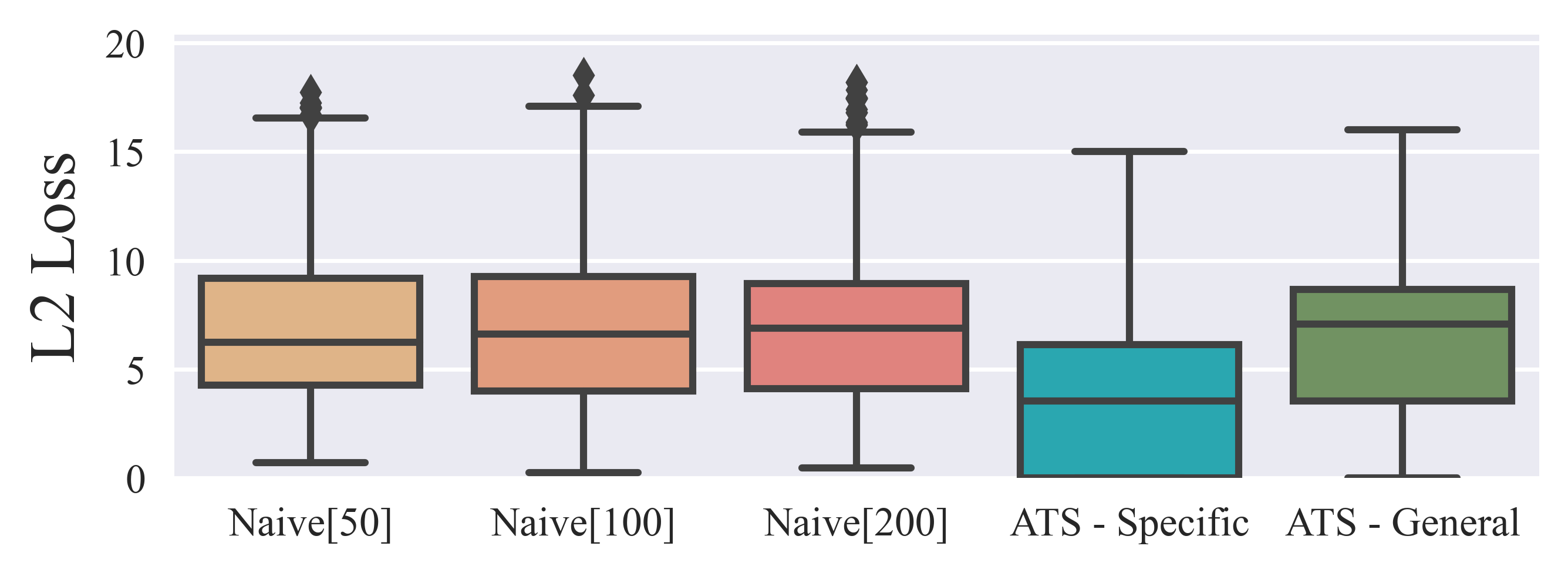}
                    \caption{$L2$ loss of arm reward estimate}
                \end{subfigure}
                \caption{Accuracy of reward learning using ATS (with specific and general teacher selection) and naive algorithms (with exploration parameters of 50, 100, and 200). ATS with specific teacher selection learns both the underlying utility function (a) and the expected rewards of each arm (b) much more accurately than ATS with general teacher selection and naive algorithms. 
                The middle line is the median,  boxes are the IQR,  whiskers are $1.5$ times the IQR, and  diamonds are outliers.}
                \label{fig:est}
            \end{figure*}

                            \begin{figure*}
        \centering
        \begin{subfigure}{0.31\textwidth}
            \centering
            \includegraphics[width=1\linewidth]{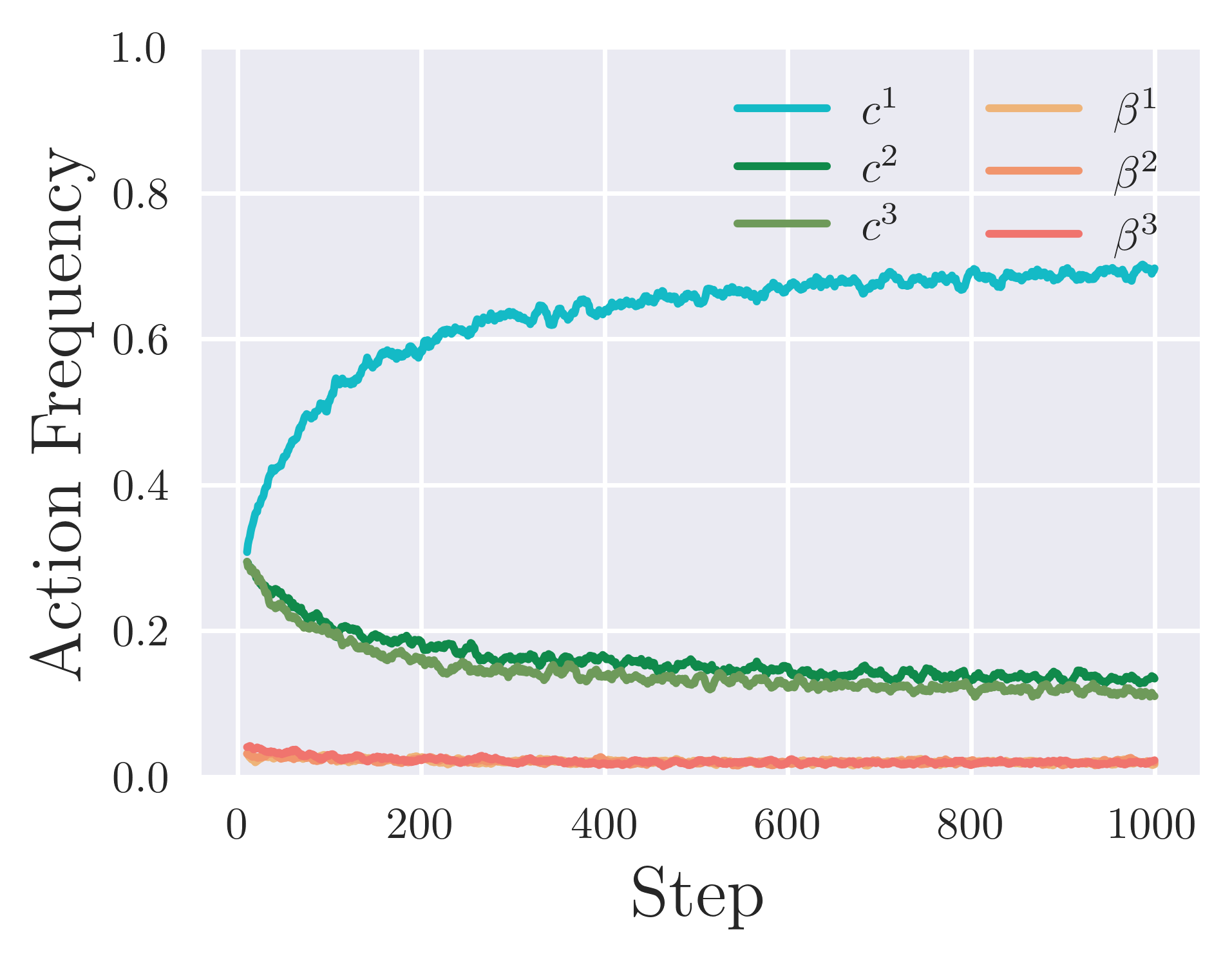}
            \caption{ATS}
            \label{fig:act-specific}
        \end{subfigure}\hfill
        \begin{subfigure}{0.31\textwidth}
            \centering
            \includegraphics[width=1\linewidth]{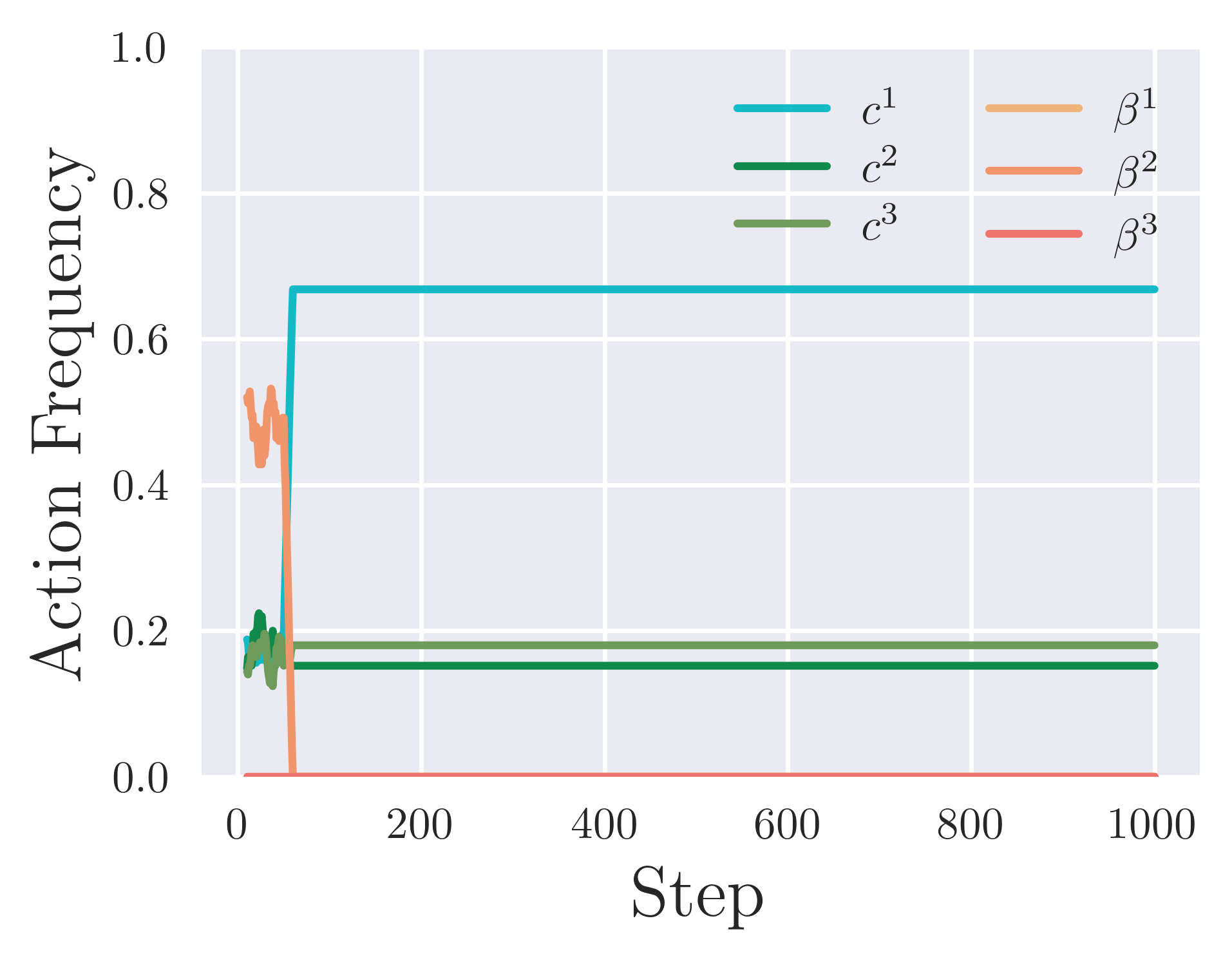}
            \caption{Naive[50]}
            \label{fig:act-naive}
        \end{subfigure}\hfill
        \begin{subfigure}{0.31\textwidth}
            \centering
            \includegraphics[width=1\linewidth]{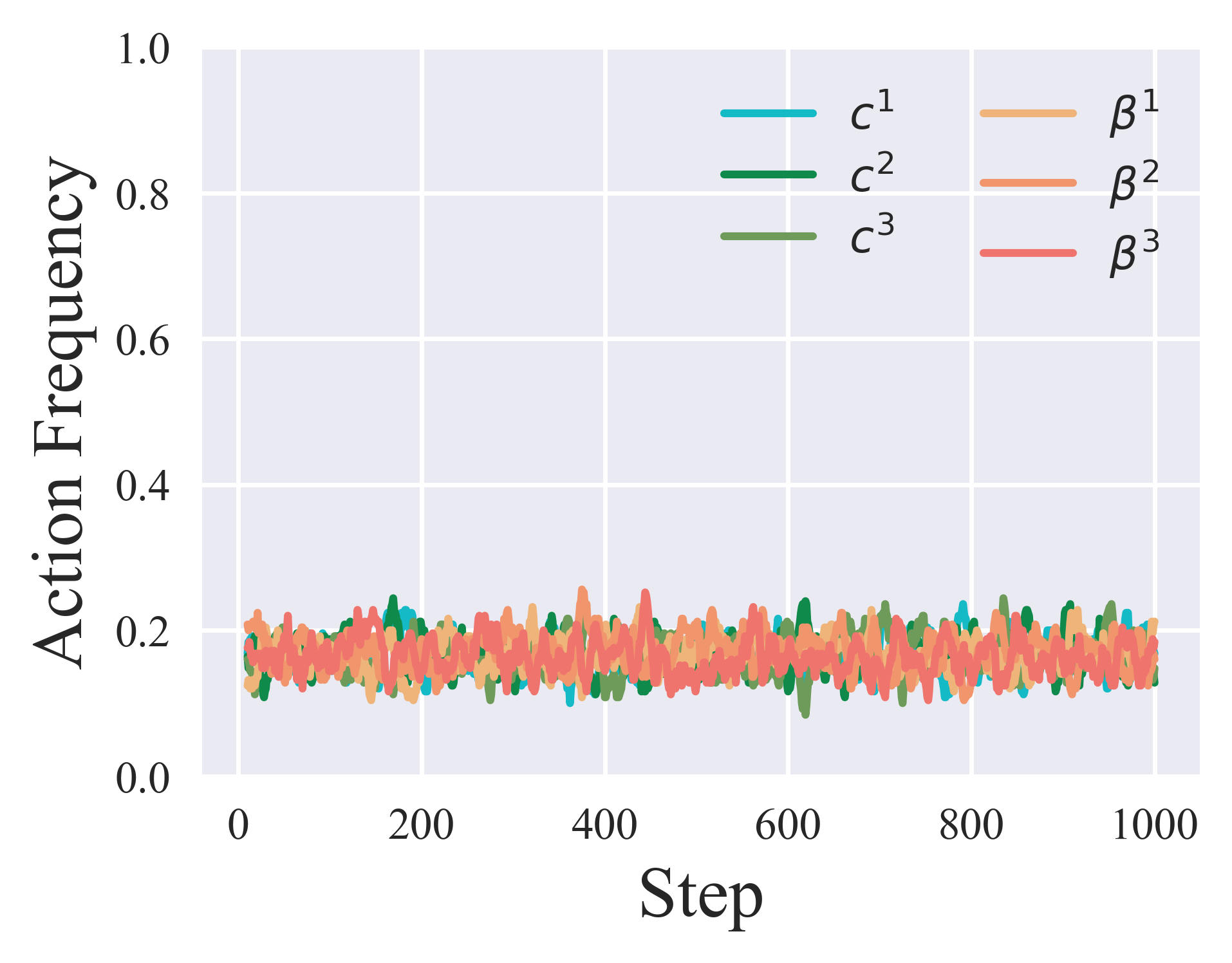}
            \caption{Random}
            \label{fig:act-random}
        \end{subfigure}\hfill
        \caption{Mean action frequencies for various algorithms. $c$ actions are arm pulls and $\beta$ actions are teacher queries. Data is averaged across 25 runs of 20 HUB problems and smoothed over 10 steps.}
        \label{fig:action}
    \end{figure*}

        \subsection{Specific v. General selection}\label{sec:spec-gen}

            We further investigate the impact of actively selecting \textit{which} teacher to query (``specific'' selection) over having the teacher chosen randomly (``general'' selection). Standard RLHF systems do not allow the agent to select which teacher to query and are therefore most akin to general selection. However, we find that the additional control afforded by specific selection allows ATS to make more informative queries, and consequently to earn higher reward.

            Figure~\ref{fig:comparison_spec_gen} compares the performance of the ATS algorithm with \textit{specific} teacher selection (in which ATS can choose which teacher to query) against \textit{general} teacher selection (in which the teacher is chosen randomly) on the conference recommendation task. Figure~\ref{fig:specific/general} shows that ATS with specific teacher selection consistently earns higher expected reward than ATS with general teacher selection across the course of training. Figure~\ref{fig:act-general} shows that ATS with general teacher selection queries all arms roughly equally, typically failing to identify the one with highest expected reward. The performance gap between specific and general selection demonstrates the importance of modeling differences between teachers and actively selecting \textit{which} teacher to query.

            \begin{figure}[h]
                \centering
                \begin{subfigure}[b]{0.31\textwidth}
                    \centering
                    \includegraphics[width=1\linewidth]{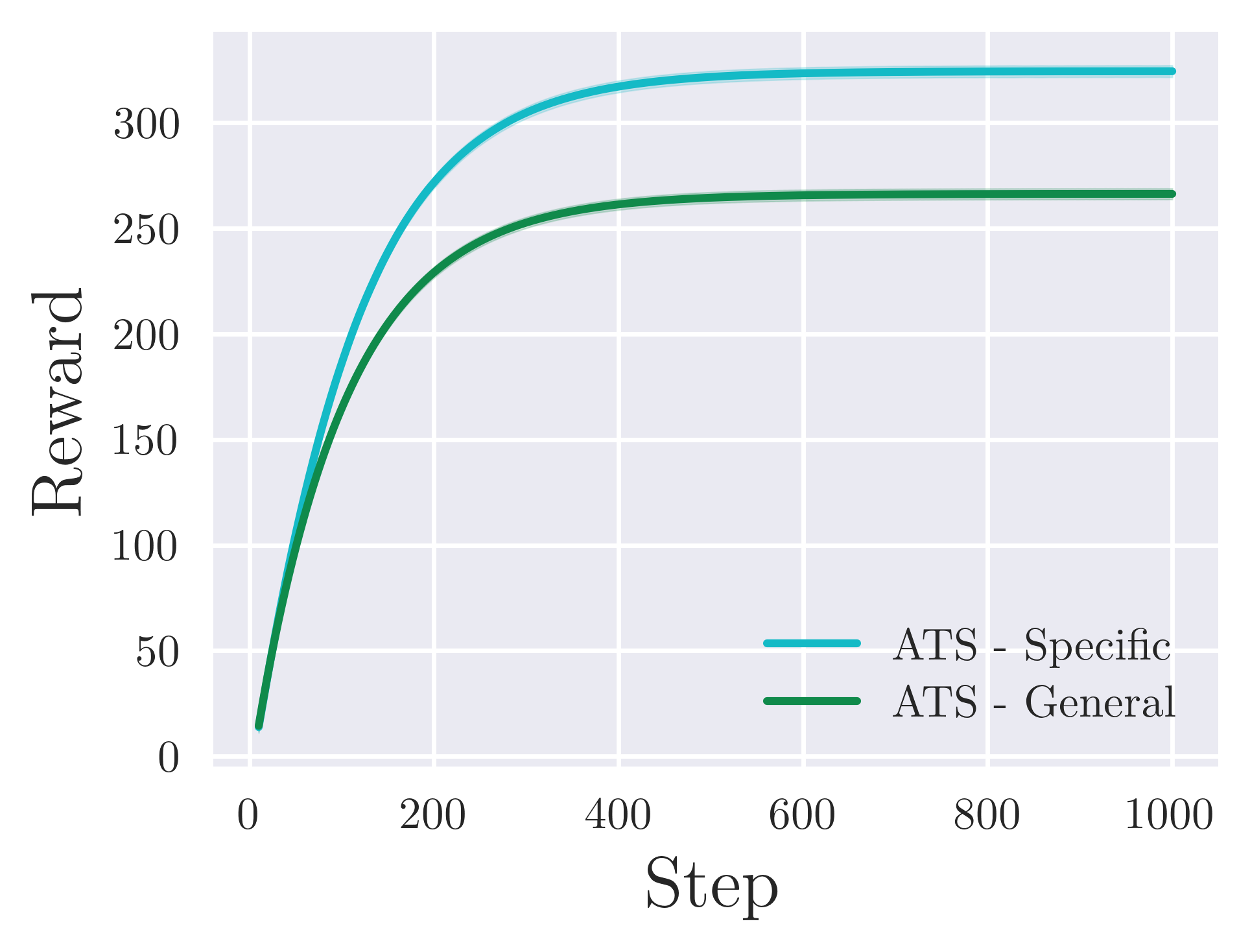}
                    \caption{Specific teacher selection outperforms general teacher selection.}
                    \label{fig:specific/general}
                \end{subfigure}%
                \hspace{0.1\linewidth}
                \begin{subfigure}{0.31\textwidth}
                    \centering
                    \includegraphics[width=1\linewidth]{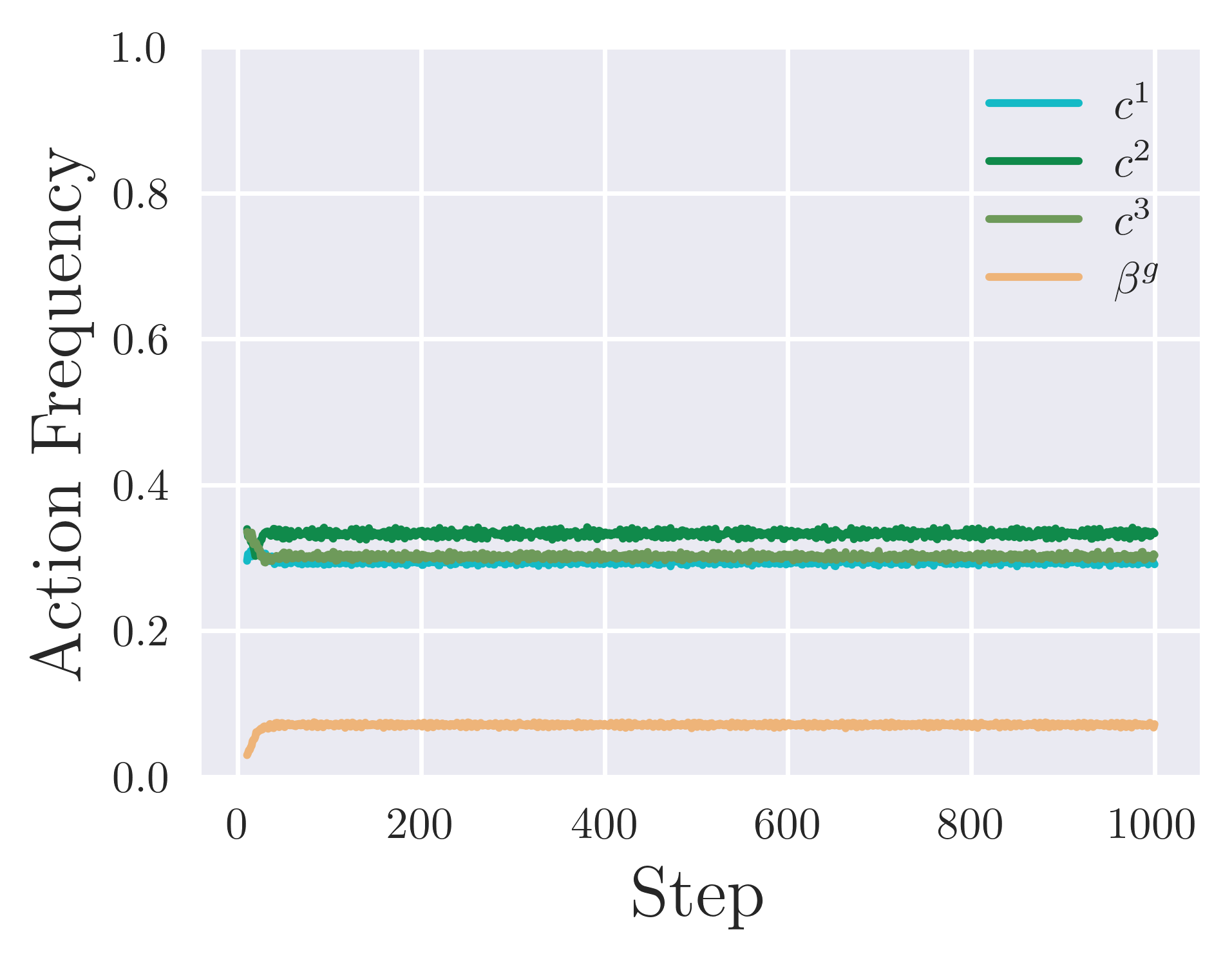}
                    \caption{ATS with general teacher selection doesn't identify the best arm.}
                    \label{fig:act-general}
                \end{subfigure}
                \caption{Performance of ATS with specific and general teacher selection. All data is averaged across 25 runs on 20 HUB problems, smoothed over 10 steps, and discounted with $\gamma=0.99$.}
                \label{fig:comparison_spec_gen}
            \end{figure}
       
\section{Case Study: Vaccine Testing}\label{sec:exp}

    Bandit-type problems are commonly used to model medical treatment investigation, so as a proof-of-concept we apply the HUB framework to a real-world medical problem: evaluating vaccines for COVID-19. In this case study, the ``teachers'' are COVID-19 tests rather than human experts, hilighting the flexibility of the HUB model. Some types of tests are more expensive to run than others, so we calculate testing costs.

    We assume that the vaccine trial has two goals: 1) identify the most effective vaccine, and 2) reduce the rate of COVID-19 infection over the course of the trial. We find that active teacher selection is the only algorithm to accomplish both goals. The Random Arms baseline algorithm, which vaccinates patients (``pulls arms'') at every timestep and never runs tests (``queries teachers''), actually earns slightly higher reward on average (likely because it avoids high testing costs). However, because it never queries a teacher, it never learns about the underlying utility function and consequently fails to identify the most effective vaccine. 



    \subsection{HUB experiments}\label{sec:vax}
        
         COVID-19 vaccine testing is complicated by the difficulty of evaluating whether a patient is infected: many infections are asymptomatic, and other illnesses cause similar symptoms. There are many ways to test whether patients have COVID-19, including symptom surveys, antigen tests, and RT-PCR tests, but the choice of which to use is nontrivial, since these alternative tests vary widely in accuracy and cost. 
        
        The HUB framework directly models these challenges. Let the item set be easily observable patient symptoms, $\mathcal{I} = \{\mathrm{None}, \mathrm{Cough}, \mathrm{Fever}\}$. The ``arms'' are vaccine candidates, $\mathcal{C}=\{c^1=\mathrm{Vaccine A}, c^2=\mathrm{Vaccine B}, c^3=\mathrm{No Vaccine}\}$, and the ``teachers'' are COVID-19 test types, $\{\mathrm{Survey}, \mathrm{Antigen}, \mathrm{RT\text{-}PCR}\}$. Surveys are the least accurate but least expensive, while RT-PCR tests are the most accurate and most expensive. 
        
        \paragraph{Experiments}
        We fix the parameters to the values reported in Figure~\ref{fig:covid}, which are derived from real-world data and estimates. Specifically, we estimate the US dollar cost of surveys at \$1.20 (accounting for 10 minutes of time at the US federal minimum wage of \$7.25), antigen tests at \$42, and RT-PCR tests at \$62 (median prices reported by~\citep{Lo_Amin_Telesford_Dawson_Kates_2023}), then scale these costs by 0.05. We construct arm distributions where patients display the most frequent and severe symptoms with no vaccination, and the least symptoms with Vaccine A, and a utility function where symptoms that have a greater chance of indicating COVID-19 infection have lower scores. Finally, we discuss and demonstrate how to infer the teacher noise parameters $\beta$ in Section~\ref{sec:beta}. We evaluate all algorithms for 25 runs of 1000 steps on this COVID-19 task. $\mathbb{U}$ and $\mathbb{D}$ are more finely discretized than in the recommendation HUB to permit more realistic values, so the resulting HUB-POMDP has 5 times more states and is more challenging to solve. 
        

            \begin{figure*}[h]
        \centering
        \includegraphics[height=70pt]{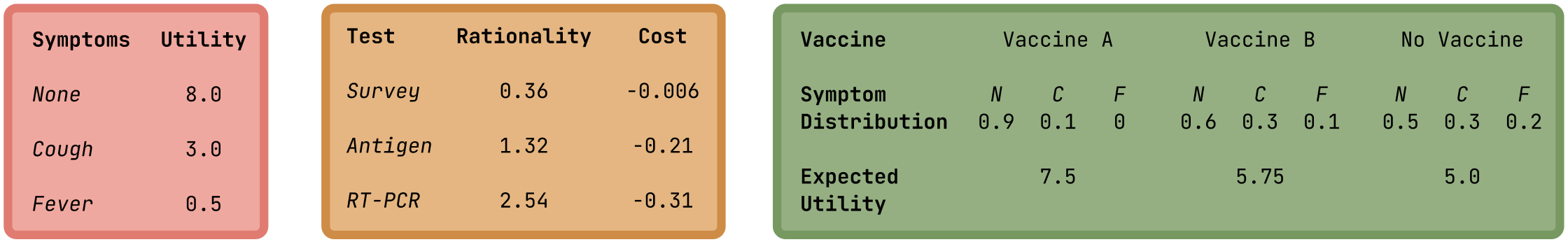}
        \caption{COVID-19 vaccine testing as a HUB problem. Symptoms (None, Cough, Fever) are items ($\mathcal{I}$), tests are teachers with rationality ($\beta$) and cost ($F$) parameters, and vaccines are arms ($\mathcal{C})$ with the specified distributions over patient symptoms ($\mathcal{D}$).}
        \label{fig:covid}
    \end{figure*}

        \paragraph{Results}

            Figure~\ref{fig:covid_comparison} summarises the performance of various HUB algorithms on the COVID-19 vaccine testing task. We find that both ATS and the Random Arms algorithm earn high reward during training (Figure~\ref{fig:covid_perf}), but of these only ATS successfully identifies the most effective vaccine ($c^1$ in Figure~\ref{fig:covid_act}). Random Arms earns high reward during training because it vaccinates at every timestep (and never pays for a test), but fails to learn which vaccine is most effective, which is a core goal of the vaccine trial. In contrast, the Naive baselines identify the best vaccine, but conduct excessive costly tests during training. ATS strikes the best balance, identifying the best vaccine while conducting a minimum of costly tests. 

            \begin{figure}[H]
                \centering
                \begin{subfigure}[b]{0.31\textwidth}
                    \centering
                    \includegraphics[width=1\linewidth]{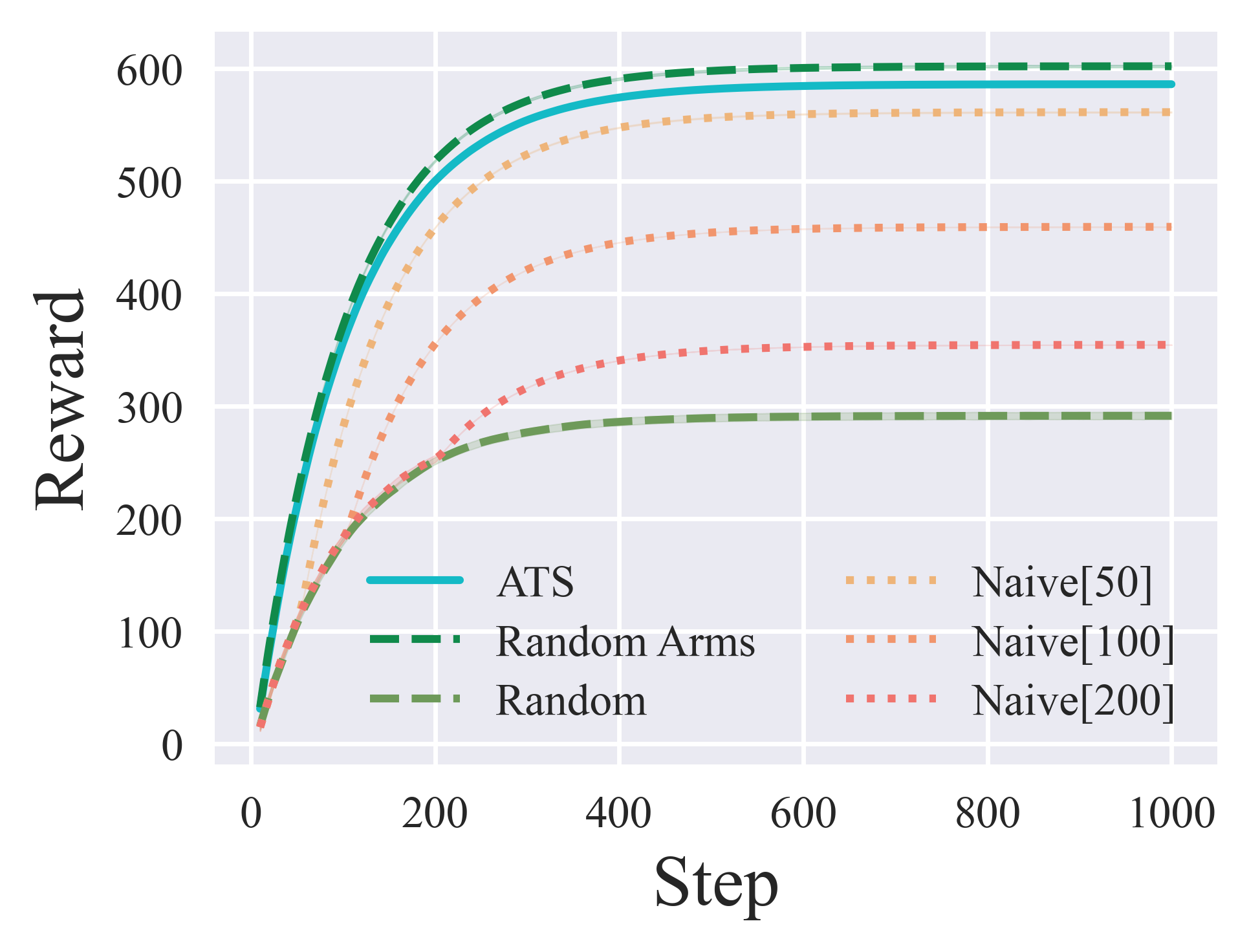}
                    \caption{Discounted cumulative reward}
                    \label{fig:covid_perf}
                \end{subfigure}%
                \hspace{0.1\linewidth}
                \begin{subfigure}[b]{0.31\textwidth}
                    \centering
                    \includegraphics[width=1\linewidth]{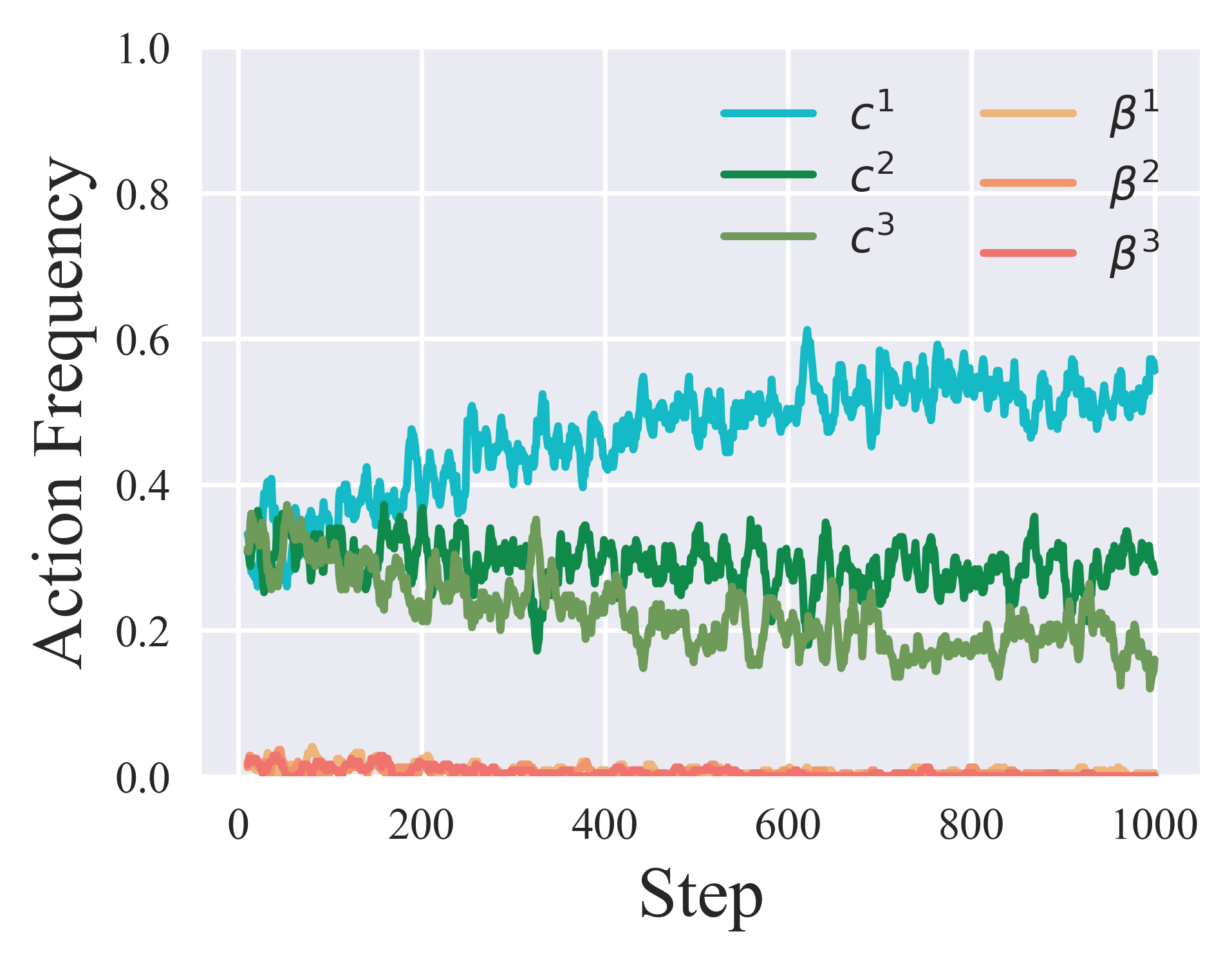}
                    \caption{ATS action frequencies}
                    \label{fig:covid_act}
                \end{subfigure}\hfill
                \caption{Performance of all algorithms and ATS action frequencies on the COVID-19 vaccine testing problem. Random Arms and ATS both earn high reward from frequently vaccinating participants (a), though only ATS additionally identifies the most effective vaccine (b). 
                }
                \label{fig:covid_comparison}
            \end{figure}

    \subsection{Teacher Noise Inference in ATS} \label{sec:beta}

        RLHF systems typically assume that the teacher rationality parameters $\beta$ are known. However, as this is sometimes unrealistic, we show in Theorem~\ref{the:beta} that $\beta$ can also be estimated from preference data. Specifically, given $\Pr(i\succ j;\beta_m,\mathcal{U})$, it is possible to etimate $\hat{\beta}_m = \frac{1}{z}\beta_m$, where $z$ is a scaling factor determined by $\mathcal{U}$. $z$ is based on the difference $\Delta_{ij} = \mathcal{U}(i) - \mathcal{U}(j)$, so as long as the same comparison pair $(i,j)$ is used, all teacher rationality estimates will be on the same scale. (They can be calculated directly if $\Delta_{ij}$ happens to be known for a specific $(i,j)$.)\footnote{Note that it is also possible to directly add $\beta$ to the state space of the HUB-POMDP and then solve it, but this increases the size of the state space and makes the problem less tractable.} We prove the theorem below in Appendix~\ref{app:proof4}.
        
        \begin{theorem}\label{the:beta}
            Given two items $i, j \in \mathcal{I}$ where $\mathcal{U}(i)<\mathcal{U}(j)$ and the preference probability $P = \Pr(i\succ j;\beta_m,\mathcal{U})$ from Equation~\ref{eq:bolt} we can estimate $\hat{\beta}_m$ as $\hat{\beta}_m = \mathrm{ln} \Big( \frac{1}{P} -1 \Big)$. If $\Delta_{ij}$ is known, we can further calculate $\beta_m = z\cdot\hat{\beta}_m$, where $z = -\Delta_{ij}^{-1}$.
        \end{theorem}

        We evaluate this procedure in simulation by setting $\beta = \{0.01, 1.0\}$, running a random policy for $1000$ timesteps, estimating $\{\hat{\beta}_1,\hat{\beta}_2\}$, and scaling the estimate so that the greatest value is equal to $1.0$. We observe a mean squared error of only $0.061$ across $100$ simulations, indicating that this procedure is accurate.

        For the vaccine testing case study, we can estimate $\beta$ by gathering real-world data on the sensitivity of COVID-19 symptom surveys~\citep{rufino2023using}, antigen tests~\citep{harmon2021validation}, and RT-PCR tests~\citep{binny2023sensitivity}, interpret this sensitivity as the probability $P$ of the test ``preferring'' a patient with no COVID-19 ($\mathcal{U}=u_{max}$) to a patient with definite COVID-19 ($\mathcal{U}=u_{min}$), and let $\Delta_{ij}=u_{min}-u_{max}$. Then we calculate $\beta_m$ for each test type using Theorem~\ref{the:beta}. These values are reported in Figure~\ref{fig:covid} and used in our experiments.



   \section{Conclusion}

    We formalized the teacher selection problem in reward learning and proposed a solution method that actively selects teachers based on their query cost and expertise. Our empirical results underscore the applicability of this framework to real-world problems, as well as the importance of modeling human teachers as distinct entities and actively choosing both \textit{when} and \textit{which} to query. 
    
    \paragraph{Scalability} Our ATS algorithm uses Bayesian belief updates over discretized utility and distribution spaces, which limits scalability to larger problems. ATS incorporates POMCPOW to solve the HUB-POMDP, which improves scalability by using particle filtering to maintain approximate belief representations and using progressive widening to limit the branching factor. Future work could further improve POMDP solver scalability using variational inference and state abstraction.
    
    In RLHF settings where the reward function is parameterized by a neural network rather than a discrete utility table, extending the HUB framework would require replacing belief updates with approximate posterior inference over network parameters (for example, using Laplace approximation or ensembles). While this is a significant engineering challenge, the conceptual framework of modeling teacher heterogeneity and planning over teacher queries to balance informativeness against cost transfers directly. Moreover, by making explicit the cost-accuracy tradeoffs between teachers, our HUB framework offers a principled way to reason about teacher selection that can inform simpler heuristic strategies at any scale.

    \paragraph{Societal Impact} While this work is intended to improve AI alignment by learning accurate, robust and value-aligned reward models from diverse teachers, the explicit teacher modeling could be vulnerable to bias.
    The HUB framework assigns rationality parameters $\beta$ to teachers, which in practice could correlate with demographic factors such as seniority, education, or cultural background. If rationality estimates are biased, the system may systematically under-query certain groups of teachers, potentially excluding important perspectives. Similarly, the cost structure could encode or reinforce existing inequities if, for example, expert teachers are more expensive because they belong to privileged groups. Practical mitigation strategies include fairness constraints on query allocation and auditing rationality estimates for demographic bias. 

    \paragraph{Limitations and Future Work}
    The purpose of this work is to investigate the novel problem of selecting teachers in reward learning, so we empirically evaluate tasks where \textit{learning} the utility function is more challenging than \textit{optimizing} it. However, many real-world tasks, such as language model finetuning, also involve challenging optimizations across enormous state spaces. Future work should combine our methods and insights with existing work on state abstractions and hierarchical options to scale the HUB formalism and ATS method to more complex domains.

    \section*{Acknowledgments}
        This work was supported by a grant from Open Philanthropy to the Center for Human-Compatible Artificial Intelligence at UC Berkeley. Rachel Freedman is supported by a Cooperative AI Foundation (CAIF) PhD Fellowship.

\bibliography{refs}
\bibliographystyle{tmlr}

\section{Proofs}

    \subsection{State Estimation (Theorem~\ref{the:convergence})}\label{app:proof1}

        \renewcommand{\themanualtheorem}{\ref{the:convergence}}
        \begin{manualtheorem}
            If the predicted utility function $\hat{\mathcal{U}}$ and the predicted arm distribution $\hat{\mathcal{D}^{\mathcal{C}}}$ are estimated by executing Algorithm~\ref{alg:naive} with $T$ samples, then $\hat{\mathcal{U}}\rightarrow\mathcal{U}^*$ and $\hat{\mathcal{D}^{\mathcal{C}}}\rightarrow\mathcal{D}^{\mathcal{C}*}$ as $T\rightarrow\infty$.
        \end{manualtheorem}
        
        \begin{proof}[Proof (Sketch).]\label{proof:convergence}
            Since the number of arms is finite and they are pulled uniformly as $T\rightarrow\infty$, the number of times that a given arm $c^k$ is pulled approaches infinity. Since each pull samples an item from the true distribution $\mathcal{D}^{k*}$ i.i.d., the empirical distribution $\hat{\mathcal{D}}^k$ will approach $\mathcal{D}^{k*}$ in the limit of infinite pulls. This argument applies for all arms $c^k\in\mathcal{C}$, so $\hat{\mathcal{D}^\mathcal{C}}\rightarrow\mathcal{D}^{\mathcal{C}*}$ as $T\rightarrow\infty$.
            Similarly, in the limit of infinite queries, $\hat{P}(\beta, (i,j))$ will approach $P^*(\beta, (i,j)) = \Pr(i\succ j;\beta,\mathcal{U}^*)$, the true probability that teacher $b$ prefers item $i$ over item 
            $j$, as determined by Equation~\ref{eq:bolt}. Given $\beta$, $(i,j)$ and $\hat{P}(\beta, (i,j))$ from the first $T$ timesteps, we can calculate $\Delta_{ij}=\hat{\mathcal{U}}(i)-\hat{\mathcal{U}}(j)$ using Equation~\ref{eq:pref-diff} below.
                    \begin{align}\label{eq:pref-diff}
                        \Pr(i\succ j;\beta^m,\mathcal{U}) &= \frac{1}{1+\mathrm{exp}(-\beta^m\Delta_{ij})}\\\implies \Delta_{ij}
                        &= -\frac{1}{\beta^m} \ln \left[ \frac{1}{\Pr(i\succ j;\beta^m,\mathcal{U})} - 1 \right].
                    \end{align}
            Given $\Delta = [\Delta_{01}, \Delta_{02},\ldots, \Delta_{NN}]$, $u_{max}$ and $u_{min}$, we can calculate $\hat{\mathcal{U}}$ as described in Algorithm~\ref{alg:naive}. $\hat{\mathcal{U}}\rightarrow \mathcal{U}^*$ as $\hat{P}\rightarrow P^*$, which occurs as $T\rightarrow\infty$.
        \end{proof}

    \subsection{Query Complexity Upper Bound (Theorem~\ref{the:upper})}\label{app:proof2}
    
        \renewcommand{\themanualtheorem}{\ref{the:upper}}
        \begin{manualtheorem}
            We can bound the maximum number of teacher queries $t$ required to learn the hidden utilities $\mathcal{U}$ with $\overline{\kappa}$ confidence in the worst case as $t\leq\frac{r(1-p)}{p\overline{\kappa}}$, where $r$ is the number of successful classifications from teachers required to learn the hidden utilities, and $p = \min_{\beta, \Delta_{ij}} \frac{1}{1+\exp(-\beta(\Delta_{ij}))}$ is the worst-case teacher accuracy.
        \end{manualtheorem}
        
        \begin{proof}
            Let $X \sim \text{NegBinom}(r, p)$. 

            Let $\overline{\kappa}$ represent the upper bound desired confidence. 
            
            We bound the maximum number of queries $t$ required to observe $r$ correct classifications with confidence $\overline{\kappa}$ as:
            
            
            
            
            \begin{align*}
                &\Pr(X = t; r, p) \geq \overline{\kappa} \\
                &\frac{E[X]}{t} \geq \Pr(X = t; r, p) \geq \overline{\kappa} 
                && \text{\footnotesize{applying the Markov Inequality}} \\
                &\left(\frac{r(1-p)}{p}\right) \cdot \left(\frac{1}{t}\right) \geq \overline{\kappa} 
                && \text{\footnotesize{derived from the definition of the Negative Binomial Probability Mass Function}} \\
                &\frac{r(1-p)}{p\overline{\kappa}} \geq t
            \end{align*}

            The worst case teacher accuracy is

            \begin{align*}
                p&=\min_{\beta, \Delta_{ij}}\frac{\exp(\beta \mathcal{U}(i))}{\exp(\beta \mathcal{U}(i)) + \exp(\beta \mathcal{U}(j))} && \text{\footnotesize{from Equation~\ref{eq:bolt}}}\\
                &=\min_{\beta, \Delta_{ij}}\frac{1}{1+\exp(-\beta\Delta_{ij})}.
            \end{align*}
            
        \end{proof}

    \subsection{Query Complexity Lower Bound (Theorem~\ref{the:lower})}\label{app:proof3}

        \renewcommand{\themanualtheorem}{\ref{the:lower}}
        \begin{manualtheorem}
            We can bound the minimum number of teacher queries $t$ required to receive a correct answer for each query pair with $\overline{\kappa}$ confidence in the worst case as $t \geq \frac{\log(\frac{k}{p})N(N-1)}{2\log(1-p)},$
            where $N$ is the number of items, and $p = \min_{\beta, \Delta_{ij}} \frac{1}{1+\exp(-\beta(\Delta_{ij}))}$ is the worst-case teacher accuracy.
        \end{manualtheorem}

        \begin{proof}
            We bound the minimum number of queries $t$ required to observe the correct answer $i \succ j$ with confidence $\kappa$ as:

            \begin{align*}
                &\Pr(X \geq t; p) \geq \kappa \\
                &(1-p)^{t} p \geq \kappa \\
                &t \geq \frac{\log(\frac{\kappa}{p})}{\log(1-p)}
            \end{align*}
            
            Since there are $N$ items, there are $\frac{N(N-1)}{2}$ unique pairs. The minimum number of queries $t$ required to observe the correct answer for each pair with confidence $\kappa$ is therefore
            
            $$t \geq \frac{\log(\frac{k}{p})N(N-1)}{2\log(1-p)}.$$
            
            The worst case teacher accuracy is

            \begin{align*}
                p&=\min_{\beta, \Delta_{ij}}\frac{\exp(\beta \mathcal{U}(i))}{\exp(\beta \mathcal{U}(i)) + \exp(\beta \mathcal{U}(j))} && \text{\footnotesize{from Equation~\ref{eq:bolt}}}\\
                &=\min_{\beta, \Delta_{ij}}\frac{1}{1+\exp(-\beta\Delta_{ij})}.
            \end{align*}
        \end{proof}

    \subsection{$\beta$ Estimation (Theorem~\ref{the:beta})}\label{app:proof4}

        \renewcommand{\themanualtheorem}{\ref{the:beta}}
        \begin{manualtheorem}
            Given two items $i, j \in \mathcal{I}$ where $\mathcal{U}(i)<\mathcal{U}(j)$ and the preference probability $P = \Pr(i\succ j;\beta_m,\mathcal{U})$ from Equation~\ref{eq:bolt} we can estimate $\hat{\beta}_m = \frac{1}{z}\beta_m$ as in Equation~\ref{eq:infer_beta2}. If $\Delta_{ij}$ is known, we can further calculate $\beta_m = z\cdot\hat{\beta}_m$, where $z = -\Delta_{ij}^{-1}$.
            
            \begin{equation}\label{eq:infer_beta2}
                \hat{\beta}_m = \mathrm{ln} \Big( \frac{1}{P} -1 \Big).
            \end{equation}
        \end{manualtheorem}

        \begin{proof}[Proof (Sketch).] \label{proof:beta}
        First, we define an affine mapping function $f_{a,b}(x) = ax + b$ such that $f_{a,b}(\mathcal{U}(i))=0$ and $f_{a,b}(\mathcal{U}(j))=1$. Lemma~\ref{lemma:affine_poss} shows that this is always possible when $\mathcal{U}(i)\neq \mathcal{U}(j)$ and furthermore that $a = \frac{-1}{i-j}$. Let $z,\,y$ be the parameters that make this mapping for these particular values of $\mathcal{U}(i)$ and $\mathcal{U}(j)$. Note that $z = \frac{-1}{i-j} = -\Delta_{ij}^{-1}$.

        Next, suppose we have that $\beta'_m = \frac{1}{a}\beta_m$, it follows that: 
        \begin{align*}
            P &= \Pr(i^0\succ i^1;\beta_m,\mathcal{U}) \\
            & = \frac{\mathrm{exp}(\beta_m\mathcal{U}(i))}{\mathrm{exp}(\beta_m\mathcal{U}(i))+\mathrm{exp}(\beta_m\mathcal{U}(j))} & \mathrm{(by~Equation~\ref{eq:bolt})}\\
            & = \frac{\mathrm{exp}(\frac{\beta_m}{a}\cdot a\mathcal{U}(i)+\frac{\beta_m}{a}b)}{\mathrm{exp}(\frac{\beta_m}{a}\cdot a\mathcal{U}(i)+\frac{\beta_m}{a}b)+\mathrm{exp}(\frac{\beta_m}{a}\cdot a\mathcal{U}(j)+\frac{\beta_m}{a}b)}\\
            & = \frac{\mathrm{exp}(\beta_m'\cdot (a\mathcal{U}(i)+b))}{\mathrm{exp}(\beta_m'\cdot(a\mathcal{U}(i)+ b))+\mathrm{exp}(\beta_m'\cdot(a\mathcal{U}(j)+b))} & \mathrm{(by~definition~of}\,\beta_m') \\
            & = \frac{\mathrm{exp}(\beta_m'\cdot f_{a,b}(\mathcal{U}(i)))}{\mathrm{exp}(\beta_m'\cdot f_{a,b}(\mathcal{U}(i)))+\mathrm{exp}(\beta_m'\cdot f_{a,b}(\mathcal{U}(j))} & \mathrm{(by~definition~of}
            \,f_{a,b})\\
            & = \frac{\mathrm{exp}(0)}{\mathrm{exp}(0)+\mathrm{exp}(\beta_m')} = \frac{1}{1+\mathrm{exp}(\beta_m')}.
        \end{align*}\label{eq:beta_inf}
        \noindent Finally, solving for $\beta_m'$ yields $\beta_m' = \frac{1}{z}\beta_m = \mathrm{ln}(\frac{1}{P}-1)\quad\rightarrow\quad\beta_m=z\cdot\mathrm{ln}(\frac{1}{P}-1)$.
    \end{proof}

    \begin{lemma}\label{lemma:affine_poss}
        Given any two numbers $m,\,n\in\mathbb{R}$ such that $m\neq n$, there exists an affine transformation $f_{a,b}:\mathbb{R}\rightarrow\mathbb{R}$ that maps the greater number to $1$ and the lesser number to $0$.
    \end{lemma}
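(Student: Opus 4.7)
The plan is to give a direct constructive proof: exhibit explicit values of the affine parameters $a$ and $b$ that realize the required mapping, and verify that the construction is well-defined precisely because $m \neq n$. Since an affine map on $\mathbb{R}$ is determined by its values at two distinct points, the conditions $f_{a,b}(m_{\min}) = 0$ and $f_{a,b}(m_{\max}) = 1$ pin the map down uniquely, so there is nothing to search for — I only need to write down the solution and check it.

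First I would reduce to a canonical case by relabeling: let $m_{\min} = \min(m,n)$ and $m_{\max} = \max(m,n)$, so the goal becomes finding $a,b \in \mathbb{R}$ with $a \cdot m_{\min} + b = 0$ and $a \cdot m_{\max} + b = 1$. Subtracting the first equation from the second gives $a(m_{\max} - m_{\min}) = 1$, and since $m \neq n$ the denominator is nonzero, so I can set
\begin{equation*}
a = \frac{1}{m_{\max} - m_{\min}}, \qquad b = -\frac{m_{\min}}{m_{\max} - m_{\min}}.
\end{equation*}
Substituting these back into $f_{a,b}(x) = ax + b$ and evaluating at $m_{\min}$ and $m_{\max}$ confirms the two defining equalities hold, completing the construction.

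There is no real obstacle here: the only thing to be careful about is the non-degeneracy condition, and the hypothesis $m \neq n$ is exactly what guarantees that $m_{\max} - m_{\min} > 0$, so the formula for $a$ never involves division by zero. I would close by noting that the invoking proof of Theorem~\ref{the:beta} only needs \emph{existence} of such an affine map for the particular values $m = \mathcal{U}(i)$ and $n = \mathcal{U}(j)$ (which are distinct by the theorem's hypothesis $\mathcal{U}(i) < \mathcal{U}(j)$), so this lemma suffices as stated.
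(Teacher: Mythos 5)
Your proof is correct and takes essentially the same route as the paper's: both set up the two linear equations $f_{a,b}(\text{lesser})=0$, $f_{a,b}(\text{greater})=1$ and solve them explicitly, with the hypothesis $m\neq n$ guaranteeing the nonzero denominator (your $a=1/(m_{\max}-m_{\min})$ and $b=-m_{\min}/(m_{\max}-m_{\min})$ agree with the paper's $a=-1/(n-m)$, $b=m/(n-m)+1$ after simplification). No issues.
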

    \begin{proof}[Proof (Sketch).]
        Suppose that $m>n$ without loss of generality. We therefore must solve the following system of equations: $f_{a,b}(m) = am + b = 1$ and $f_{a,b}(n) = an + b = 0$. The solution is $a = \frac{-1}{n-m}$ and $b = \frac{m}{n-m} + 1$, which always exists when $m\neq n$.
    \end{proof}

\section{POMCPOW Rollout Policies}\label{app:hyperparam}

    ATS solves the HUB-POMDP using \textit{partially observable Monte-Carlo planning with observation widening} (POMCPOW) augmented with a custom rollout policy for estimating the value of leaf nodes in the search tree. We evaluate a \textit{random action} rollout policy, which takes actions uniformly at random from $\mathcal{A}=\mathcal{C}\cup\beta$, a \textit{random arm} rollout policy, which chooses arms uniformly at random from $\mathcal{C}$, and a \textit{best arm} policy, which calculates which arm has the highest expected utility \textit{according to the current belief} $b$, then always chooses that arm.
    
    Since a utility-maximizing agent will choose arms more often if it believes them to have higher utility, the \textit{best arm} policy rollouts most closely resemble the actions the actual policy would take from belief $b$, yielding the most accurate value estimates. As a result, ATS with best arm rollouts outperforms the alternatives on the paper recommender domain, as shown in Figure~\ref{fig:comparison_rollout}. Results are averaged across 25 runs on 20 different paper recommendation tasks.

    \begin{figure}[ht!]
        \centering
        \includegraphics[width=0.35\linewidth]{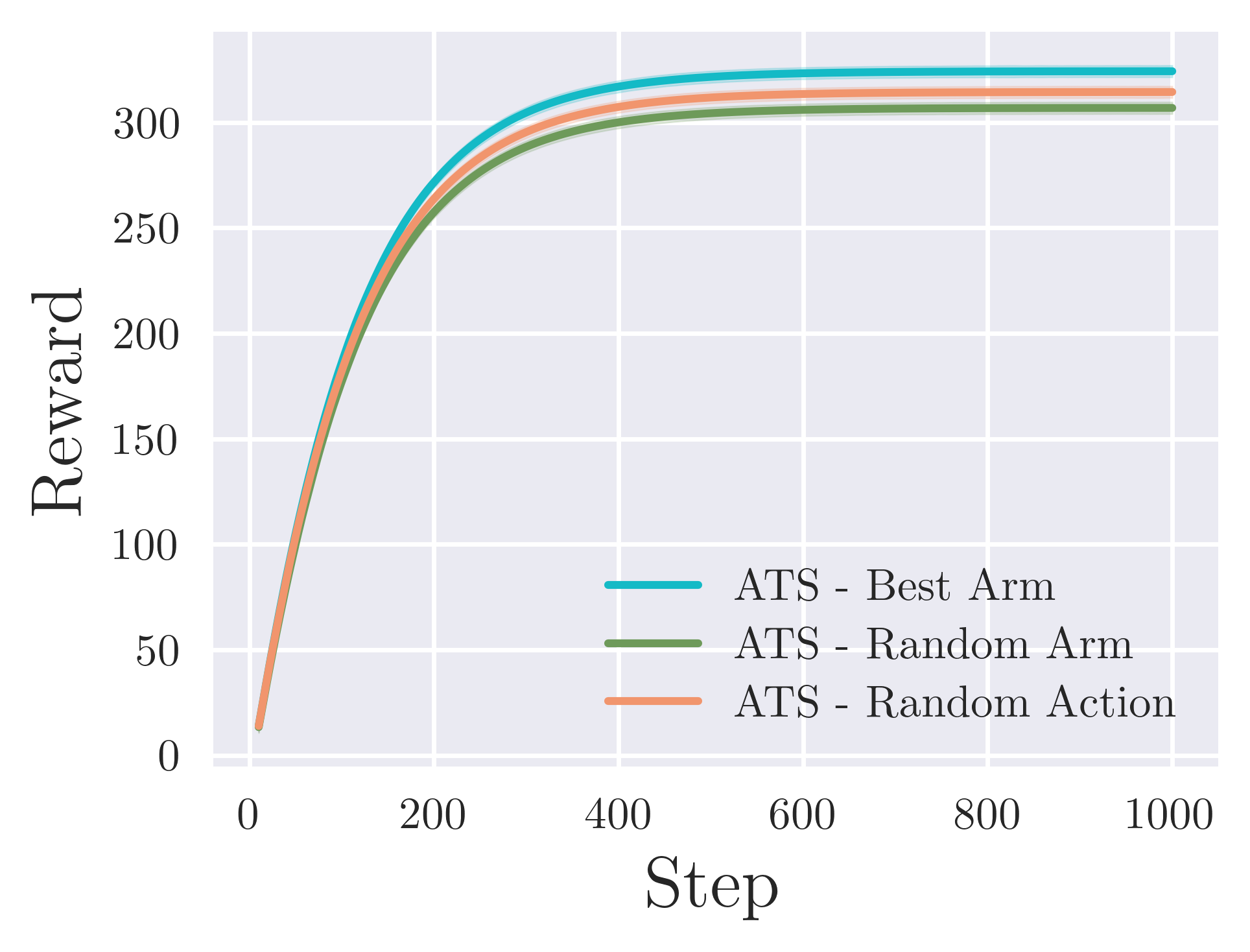}
        \caption{Performance of ATS with various rollout policies. The best arm rollout policy outperforms the random arm and random action rollout policies. All data is averaged across 25 runs on each of 20 HUB problems, smoothed over 10 steps, and discounted with $\gamma=0.99$.}
        \label{fig:comparison_rollout}
    \end{figure}

\section{HUB Cost effects}\label{app:cost}

    We investigate the impacts of teacher query cost on ATS performance by varying professor feedback costs in the paper recommendation domain. We set linear costs $F=\{-1, -2, -3\}$ and scale them by a \textit{cost multiplier}. As in the other paper recommendation experiments, results are averaged across 25 runs on 20 different paper recommendation tasks.
    
    We find that ATS responds rationally to changes in costs, querying teachers more sparingly (Figure~\ref{fig:cost_teach}) and consequently identifying the best arm more slowly (Figure~\ref{fig:cost_best} as overall costs increase. This leads to a slight decrease in overall performance (Figure~\ref{fig:specific_cost}). 

    \begin{figure}[t]
        \centering
        \begin{subfigure}[b]{0.33\textwidth}
            \centering
            \includegraphics[width=1\linewidth]{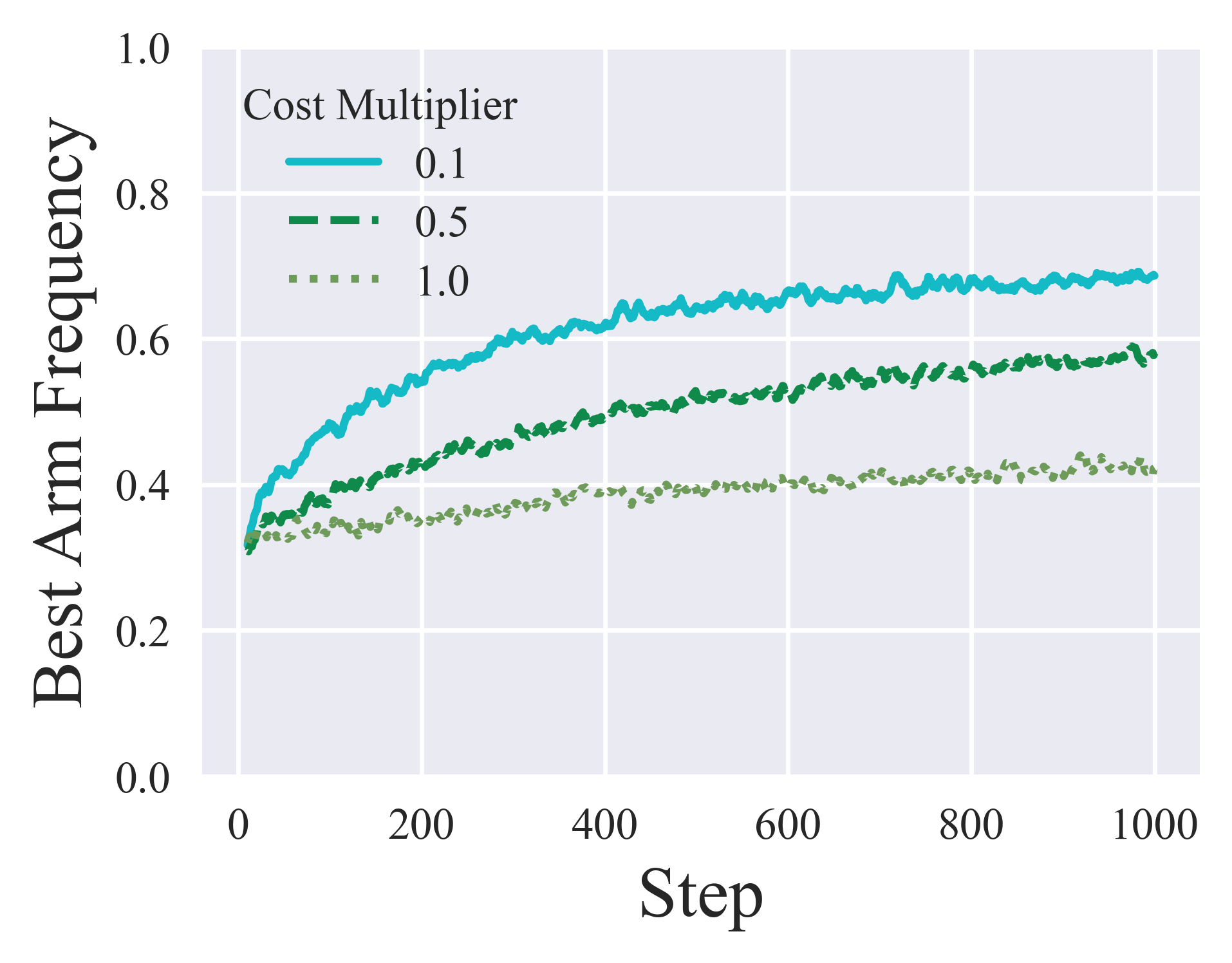}
            \caption{Frequency of pulling the best arm}
            \label{fig:cost_best}
        \end{subfigure}%
        \hspace{0.1\linewidth}
        \begin{subfigure}{0.33\textwidth}
            \centering
            \includegraphics[width=1\linewidth]{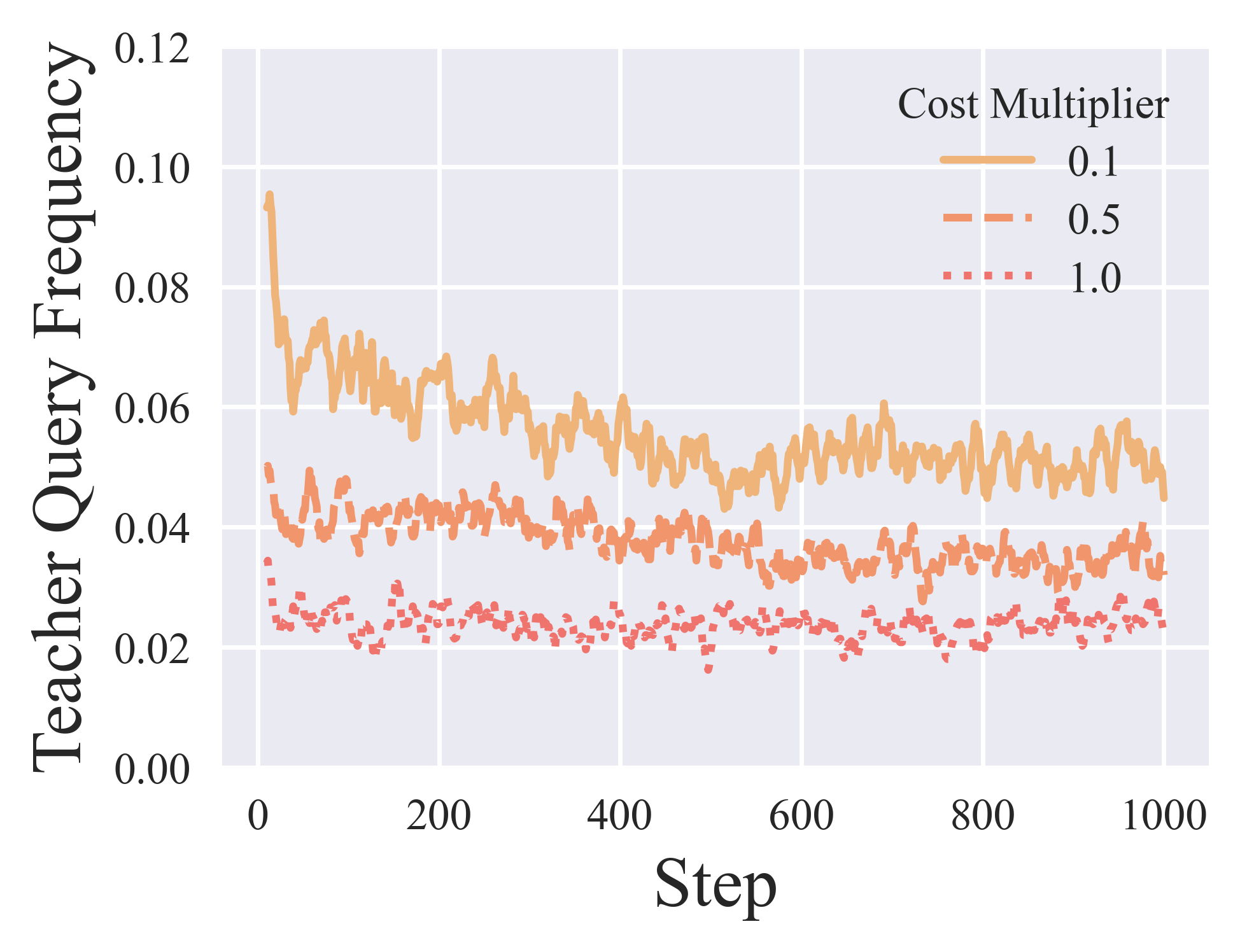}
            \caption{Frequency of teacher queries}
            \label{fig:cost_teach}
        \end{subfigure}
        \begin{subfigure}[b]{0.33\textwidth}
            \centering
            \includegraphics[width=1\linewidth]{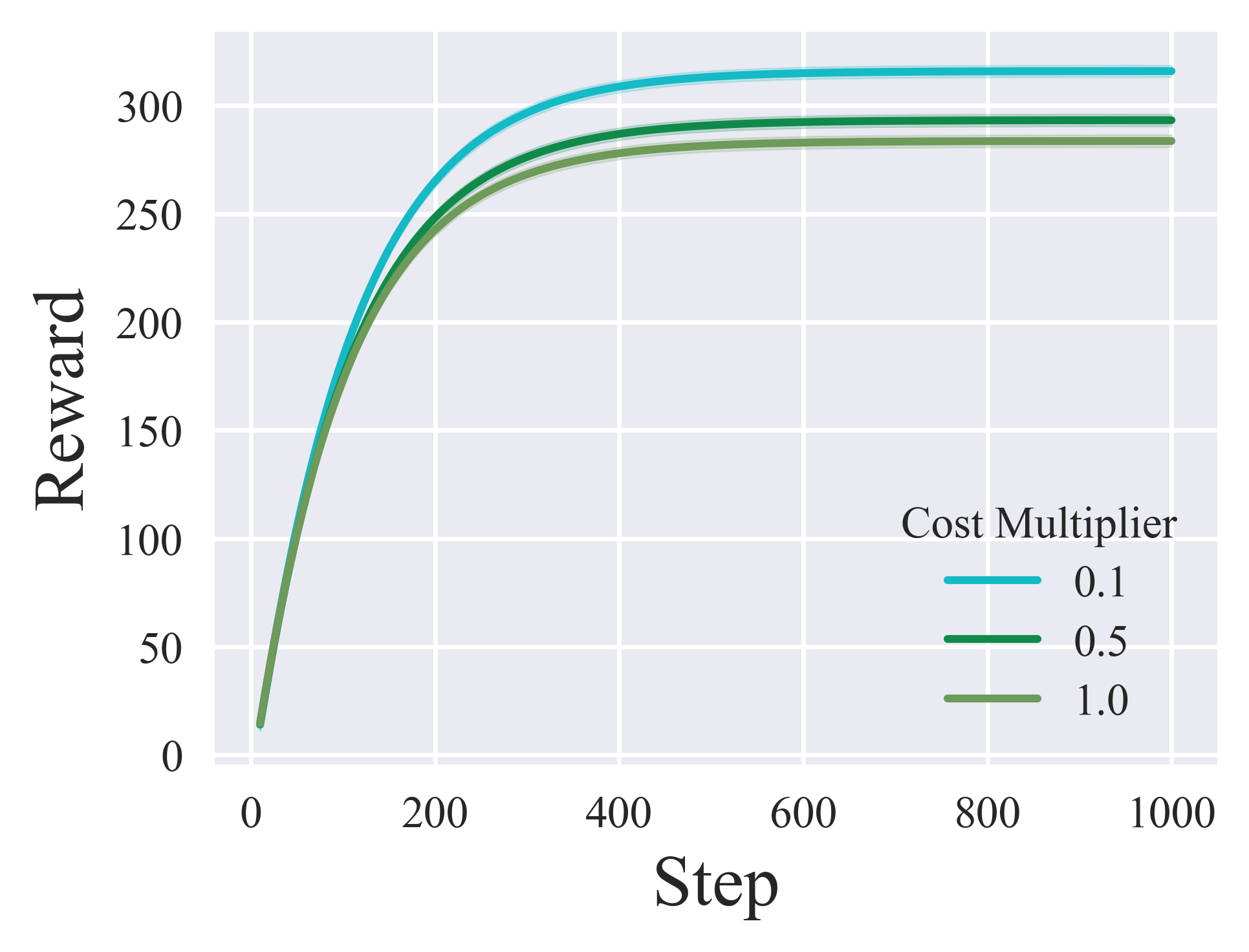}
            \caption{Discounted cumulative reward}
            \label{fig:specific_cost}
        \end{subfigure}
        \caption{ATS behavior and performance varies with teacher query costs. Data is averaged across 25 runs on 20 paper recommendation HUB problems and smoothed over 10 steps.}
        \label{fig:cost}
    \end{figure}

\end{document}